\newcommand{\twodots}{\mathinner {\ldotp \ldotp}}
\DeclareMathOperator*{\argmax}{arg\,max}
\newtheorem{theorem}{Theorem}[]
\newtheorem{example}{Example}[]
\newtheorem{proposition}[theorem]{Proposition}
\newtheorem{definition}{Definition}[]
\newtheorem*{remark}{Remark}
\newcommand\Voters{\mathcal{N}}
\newcommand\cj[1]{}
\newcommand\ct[1]{}
\definecolor{burgundy}{rgb}{0.8, 0.0, 0.13}
\def\bt{\color{black}}
\def\et{\color{black}}
\title{Multi-winner Approval Voting Goes Epistemic}
\author{
Tahar Allouche$^1$\footnote{Contact Author}\and
Jérome Lang$^1$\And
Florian Yger$^{1}$\\
\affiliations
$^1$LAMSADE, CNRS, PSL, Université Paris-Dauphine\\
\emails
tahar.allouche@dauphine.eu, lang@lamsade.dauphine.fr, florian.yger@lamsade.dauphine.fr
}
\begin{document}

\maketitle

\begin{abstract}
Epistemic voting interprets votes as noisy signals about a ground truth. We consider contexts where the truth consists of a set of objective winners, knowing a lower and upper bound on its cardinality. 
A prototypical problem for this setting is the aggregation of
multi-label annotations with prior knowledge on the size of the ground truth.
We posit 
noise models, for which we define rules that output an optimal set of winners.
We report on experiments on 
multi-label annotations (which we collected).
\end{abstract}

\section{Introduction}

The epistemic view of voting assumes the existence of a ground truth which, usually,
is either an alternative or a ranking over alternatives.  Votes reflect opinions or beliefs about this ground truth; the goal is to aggregate these votes so as to identify it. Usual methods define a noise model specifying the probability of each voting profile given the ground truth, and output the alternative that is the most likely state of the world, or the ranking that is most likely the true ranking. 

Now, there are contexts where the ground truth does not consist of a single alternative nor a ranking, but of a {\em set of alternatives}. Typical examples are multi-label crowdsourcing (find the items in a set that satisfy some property, {\em e.g.} the sport teams appearing on a picture)
or finding the  objectively $k$ best candidates 
(best papers at a conference, 
best performance in artistic sports, 
 $k$ patients with highest probabilities of survival if being assigned a scarce medical resource).\

These alternatives that are truly in the ground truth are called `winning' alternatives. Depending on the context, the number of winning alternatives can be fixed, unconstrained, or more generally, constrained to be in a given interval.
This constraint 
expresses some {\em prior knowledge on the cardinality of the ground truth}.
Here are some examples:
\begin{itemize}
    \item {\em Picture annotation via crowdsourcing}: participants are shown a picture taken from a soccer match and have to identify the team(s) appearing in it. The ground truth is known to contain one or two teams. 
    \item{\em Guitar chord transcription}: voters are base classifier algorithms \cite{aggregation2020} which, for a given chord, select the set of notes constitute it. The true set of notes can contain 
    three to 
    six alternatives. 
    \item {\em Jury}: participants are members of a jury which has to give an award to three papers presented at a conference: the number of objective winners is fixed to three.
 (In a variant, the number of awards would be {\em at most} three.)
    \item {\em Resource allocation}: participants are doctors and alternatives are Covid-19 patients in urgent need of intensive care; there is a limited number $k$ of intensive care units. The ground truth consists of those patients who most deserve to be cured (for example those with the $k$ highest probabilities of survival if cured). 
\end{itemize}

We assume that voters provide a simple form of information: {\em approval ballots}, indicating which alternatives they consider plausible winners. These approval ballots are not subject to any cardinality constraint: {\em a voter may approve a number of alternatives, even if it does not lie in the interval bearing on the output}. This is typically the case for totally ignorant voters, who are expected to approve all alternatives. 
Sometimes, the aggregating mechanism
has some prior information about the likelihood of alternatives and the reliability of voters. We first study a simple case where this information is specified in the input: in the noise model, each voter has a probability $p_i$ (resp. $q_i)$ of 
approving a winning (resp. non-winning) alternative,
and each alternative 
has a prior probability 
to be winning.
This departs from classical voting, where voters are usually treated equally ({\em anonymity}), and similarly for alternatives ({\em neutrality}). 


This simple case serves as a building component for the more complex case where these parameters are not known beforehand but {\em estimated from the votes}: votes allow to infer information about plausibly winning alternatives, from which we infer information about voter reliabilities, which leads to revise information about winning alternatives, 
and so on until the process converges. Here we move back to an anonymous and neutral setting, since all alternatives (resp. voters) are treated equally before votes are known.

After discussing related work (Section \ref{sec:related}), we introduce the model (Section \ref{sec: Prior}) 
and give an estimation algorithm (Section \ref{sec:estimating}), first in the case where the parameters are known, and then in the case where they are estimated from the votes.  
In Section \ref{sec: experiments} we present a data gathering task and analyse the results of the experiments.
Section \ref{conclusion} concludes.

\section{Related Work}\label{sec:related}

\paragraph{Epistemic social choice}
It 
studies how a ground truth can be recovered from noisy votes, viewing voting rules as maximum likelihood estimators. 
Condorcet's {\em jury theorem} \cite{condorcet1785}  considers $n$ independent, equally reliable voters and two alternatives that are {\em a priori} equally likely, and  states that if every voter votes for the correct alternative with probability $p>\frac{1}{2}$, then the majority rule outputs the correct decision with a probability that increases with $n$ and tends to 1 when $n$ grows to infinity. See \cite{collective2017} and \cite{Premises2008} for proofs and discussion.

The framework was later generalized to more than two alternatives 
\cite{condorcet1988}, to voters with different competences  \cite{ShapleyGrofman84,maximum2004}, to a nonuniform prior over alternatives \cite{Ben-YasharN97,optimal2001}, 
to various noise models \cite{common2005,ConitzerRX09}, to correlated votes \cite{voting2011,epistemic2017}, and to multi-issue domains \cite{XiaCL10}. 
\cite{truth2019} define a method to aggregate votes weighted according to their average proximity to the other votes as an estimation of their reliability. A review of the field can be found  in 
\cite{ElkindSlinko16}. 

Epistemic voting with approval ballots has scarcely been considered. \cite{isapproval2015} study noise models for which approval voting is optimal given $k$-approval votes, in the sense that the objectively best alternative gets elected, the ground truth being a ranking over all alternatives. \cite{AlloucheLY22} do a similar work for a ground truth consisting of a single candidate.
\cite{learning2017} prove that the number of samples needed to recover the ground truth ranking over alternatives with high enough probability from approval ballots is exponential if ballots are required to approve $k$ candidates, but polynomial if the size of the ballots is randomized.

\paragraph{Multi-winner voting rules}
They output a set of alternatives (of fixed cardinality or not) from a set of votes (approvals or rankings). There have been a lot of recent developments in the field (a recent survey is \cite{FaliszewskiSST17}), mostly concerning the classical (non-epistemic) view of social choice, where votes express preferences. 

Multi-winner epistemic voting has received only little attention.
\cite{maximum2012} assume a ground truth ranking over alternatives, and identify rules that output the $k$ alternatives maximizing the likelihood to contain the best alternative, or the likelihood to coincide with the top-$k$ alternatives. 
The last section of \cite{maximum2011} defines a noise model where the ground truth is a set of $k$ alternatives (and the reported votes are partial orders).
The only work we know where the noise models produce random {\em approval votes} from a ground truth consisting of {\em a set of alternatives} is \cite{Evaluating2020}.
They define a family of distance-based noise models, whose prototypical instance
generates approval votes selecting an alternative in the ground truth
(resp. not in the ground truth) with probability $p$ (resp. $1-p$);
as we see further, this is a specific case of our noise model.


\paragraph{Crowdsourcing}


\cite{Axiomatic2014,Empirical2014} give a social choice-theoretic study of collective annotation tasks. \cite{ShahZ20} design mechanisms for incentive-compatible elicitation with approval ballots in crowdsourcing applications. 
 Beyond social choice, collective multi-label annotation was first addressed in \cite{reliable2010}, which studies the agreement between experts and non-experts in some multi-labelling tasks, and in \cite{scalable2014}, where a scalable aggregation method is presented to solve the multi-label estimation problem.


\section{The Model}\label{sec: Prior}



Let $\Voters=\{1,\dots,n\}$ be a set of voters, and $\mathcal{A}=\{a_1,\dots,a_m\}$ a set of alternatives (possible objects in images, notes in chords, papers, patients...).
Consider a set of $L$ {\em instances}: an instance $z$ consists of an approval profile $A^z=(A_1^z,\dots,A_n^z)$ where $A_i^z \subseteq \mathcal{A}$ is an approval ballot for every $i \in \Voters$. For example, in a crowdsourcing context, a task usually contains multiple questions, and an instance comprises the voters' answers to one of these questions.

For each instance $z\in L$, there exists an \emph{unknown} ground truth $S^*_z$ belonging to $\mathcal{S}=2^{\mathcal{A}}$, which is the set of objectively correct alternatives in instance $z$. 
It is common knowledge that the number of alternatives in each of them lies in the interval $[l,u]$:
$S^*_z \in \mathcal{S}_{l,u}= \{S \in \mathcal{S}, l\leq |S| \leq u\}$, for given bounds $0\leq l \leq u\leq m $.

Our goal is to unveil the ground truth for each of these instance using the votes and the prior knowledge on the number of winning alternatives.
We 
define a noise model consisting of two parametric distributions, namely, a conditional distribution of the approval ballots given the ground truth, and a prior distribution on the ground truth. Here we depart from classical noise models in epistemic social choice, as we suppose that the parameters of these distributions may be unknown and thus need to be estimated.

For each voter $i\in \Voters$, we suppose that there exist two unknown parameters $(p_i,q_i)$ in $(0,1)$ such that the approval ballot $A_i^z$ on an instance $z \in L$ is drawn according to the following distribution: for each $a \in {\cal A}$,
$$
P(a \in A_i^z|S^*_z=S) = \left\{
    \begin{array}{ll}
        p_i & \mbox{if } a \in S \\
        q_i & \mbox{if } a \notin S
    \end{array}
\right. 
$$
 where 
$p_i$ (resp. $q_i$) is the (unknown) probability that voter $i$ approves a correct (resp. incorrect) alternative.
Then we make the following assumptions:
\begin{compactitem}
    \item[(1)]   A voter's  approvals of alternatives  are  mutually  independent given 
    the ground truth and parameters $(p_i,q_i)_{i \in \Voters}$.
   \item[(2)]   Voters' ballots are  mutually  independent given the ground truth.
    \item[(3)] Instances are independent given the parameters $(p_i,q_i)_{i \in \Voters}$ and the ground truths.
\end{compactitem}

To model the prior probability of any set $S$ to be the ground truth $S^*$, we define
parameters $t_j=P(a_j \in S^*)$. $t_j$ can be understood as the prior probability of $a_j$ to be in the ground truth set $S^*$  before the cardinality constraints are taken into account. These, together with an independence assumption on the events $\{a_j \in S^*\}$, gives 
$P(S=S^*)=\prod\limits_{a_j \in S} t_j \prod\limits_{a_j \notin S}1- t_j$.  Note that the choice of the parameters $t_j$ is not crucial when running the algorithm for estimating the ground truth: we will see in Section \ref{subsec:amle} that it converges whatever their values.  
The distribution conditional to the prior knowledge on the size of the ground truth can 
be seen as a projection on the constraints  followed by a normalization: 
\begin{small}
$$\tilde{P}(S)=P(S^*=S|l\leq|S^*|\leq u)=\frac{P(S^*=S \cap |S^*|\in [l,u])}{P(|S^*|\in [l,u])} $$
\end{small}
It follows:
$$\tilde{P}(S) = \left\{
    \begin{array}{ll}
        \frac{1}{\beta(l,u,t)}\prod\limits_{a_j\in S}t_j \prod\limits_{a_j \notin S} (1-t_j) & \mbox{if } S \in \mathcal{S}_{l,u} \\
        0 & \mbox{if } S \notin \mathcal{S}_{l,u}  
    \end{array}
\right. $$
where
$\beta(l,u,t)=\sum\limits_{S\in \mathcal{S}_{l,u}} \prod\limits_{a_j\in S}t_j \prod\limits_{a_j \notin S} (1-t_j)$.

The ground truths associated with different instances are assumed to be mutually independent given the parameters.

Two particular cases are worth discussing. First, when $(l,u)=(0,m)$, the problem is {\em unconstrained} and we have $\beta(0,m,t)=P(|S^*| \in [0,m])=1$, so $\tilde{P}(S)=P(S=S^*)$.
In this case the problem degenerates into a series of independent binary label-wise estimations (see Subsection \ref{subsec:gt-vp}). 
  
Second, in the single-winner case $(l,u)=(1,1)$, we have $\tilde{P}(\{a_j\})
=\frac{t_j \prod_{h\neq j}1-t_h}{\beta(1,1,t)}$, therefore, for any approval profile $A$,  $P(S^* = \{a_j\}|A,|S^*|=1)\propto \frac{t_j}{1-t_j} P(A|S^* = \{a_j\})$. We recover the same estimation problem if we simply introduce $\alpha_j=P(S^* = \{a_j\})$ with $\sum \alpha_j =1$ as in \cite{optimal2001},
in which case we have $P(S^* = \{a_j\}|A,|S^*|=1)\propto \alpha_j P(A|S^* = \{a_j\})$.

\section{Estimating the Ground Truth}
\label{sec:estimating}

Our aim is the intertwined estimation of the ground truth and the parameters via maximizing the total likelihood of the instances:
\begin{small}
\begin{align*}
 \mathcal{L}(A,S,p,q,t) &=\prod_{z=1}^L \tilde{P}(S_z) \prod_{i=1}^n P(A_i^z|S_z)
 \end{align*}
\end{small} 
where:
\begin{small}
$$ P(A_i^z|S_z)=p_i^{|A_i^z\cap S_z|} q_i^{|A_i^z\cap \overline{S_z}|}(1-p_i)^{|\overline{A_i^z}\cap S_z|} (1-q_i)^{|\overline{A_i^z}\cap \overline{S_z}|} $$
\end{small}
To this aim, we will introduce an iterative algorithm whose main two steps will be presented in sequence, in the next subsections, before the main algorithm is formally defined and its convergence shown. These two steps are:
\begin{compactitem}
    \item Estimating the ground truths given the parameters.
    \item Estimating the parameters given the ground truths.
\end{compactitem}
Simply put, the algorithm consists in iterating these two steps until it converges to a fixed point. 

\subsection{Estimating the Ground Truth Given the Votes and the Parameters}\label{subsec:gt-vp}

Since instances are independent given the parameters, we focus here on one instance with ground truth $S^*$ and 
profile $A=(A_1,\dots,A_n)$. Before diving into 
maximum likelihood estimation (MLE), 
we introduce some notions and prove some lemmas. In this subsection, we suppose that the parameters $(p_i,q_i)_{i \in \Voters}$ and $(t_j)_{j \in \mathcal{A}}$ are known (later on, these parameters will be replaced by their estimations at each iteration of the algorithm).
Thus, all in all, input and output are as follows:
\begin{compactitem}
    \item Input: approval profile $A$;
    parameters $(p_i,q_i)_{i \in \Voters}$ and $(t_j)_{j \in \mathcal{A}}$.
    \item Output: MLE of the ground truth $S^*$.
\end{compactitem}

\begin{definition}[weighted approval score]\label{weighted approval score}
Given an approval profile $(A_1,\dots,A_n)$, noise parameters $(p_i,q_i)_{1\leq i \leq n}$ 
and prior parameters $(t_j)_{1\leq j \leq m}$, define:
$$app_w(a_j)=ln\left(\frac{t_j}{1-t_j}\right) + \sum_{i: a_j\in A_i} ln\left(\frac{p_i(1-q_i)}{q_i(1-p_i)}\right)$$
\end{definition}

The scores $app_w(a_j)$ can be interpreted as weighted approval scores for a $(n+m)$-voter profile where:
\begin{compactitem}
   \item for each voter $1\leq i \leq n$: $i$ has a weight $w_i=ln\left(\frac{p_i(1-q_i)}{q_i(1-p_i)}\right)$ and casts approval ballot $A_i$.
    \item for each $1\leq j \leq m$: there is a virtual voter with weight $w_j=ln\left(\frac{t_j}{1-t_j}\right)$ who casts approval ballot $A_j=\{a_j\}$.
\end{compactitem}
While the weight of each voter $i \in \Voters$ depends on her reliability, each prior information on an alternative plays the role of a virtual voter who only selects the concerned alternative, with a weight that increases as the prior parameter increases.

From now on, we suppose without loss of generality that the alternatives are ranked according to their score: 
$$app_w(a_1) \geq app_w(a_2) \geq \dots \geq app_w(a_m) $$

\begin{definition}[threshold and partition]\label{threshold_partition}
Define the threshold:
$$\tau_n=\sum_{i=1}^n ln\left( \frac{1-q_i}{1-p_i}\right)$$
and the partition of the set of alternatives in three sets:
$$
\left\{
    \begin{array}{ll}
        S_{max}^{\tau_n} & =\left\{a\in A, app_w(a)>\tau_n\right\} \\
        S_{tie}^{\tau_n} & =\left\{a\in A, app_w(a)=\tau_n \right\}\\
        S_{min}^{\tau_n} & =\mathcal{A}\backslash (S_{max}^{\tau_n}\cup S_{tie}^{\tau_n})
        \end{array}
\right.
$$
and let $k_{max}^{\tau_n}=|S_{max}^{\tau_n}|, k_{tie}^{\tau_n}=|S_{tie}^{\tau_n}|, k_{min}^{\tau_n}=|S_{min}^{\tau_n}|$.
\end{definition}

The next result characterizes the sets in $\mathcal{S}$ that are MLEs of the ground truth given the parameters. 
\begin{theorem}\label{constrained}
$\Tilde{S} \in \argmax_{S\in \mathcal{S}} \mathcal{L}(A,S,p,q,t)$ if and only if there exists $k\in [l,u]$ such that 
\bt$\Tilde{S}$ is the set of $k$ alternatives with  the highest $k$ values of $app_w$ and:\et
\begin{equation}\label{constraints}
\left\{
    \begin{array}{cl}
        |\Tilde{S}\cap S_{max}^{\tau_n}| & =\min(u,k_{max}^{\tau_n})\\
        |\Tilde{S}\cap S_{min}^{\tau_n}| & =\max(0,l-k_{tie}^{\tau_n}-k_{max}^{\tau_n})
        \end{array}
\right.
\end{equation}
\end{theorem}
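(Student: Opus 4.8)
The plan is to pass to logarithms, rewrite the single-instance log-likelihood as an additive, per-alternative expression, and then solve the resulting cardinality-constrained combinatorial optimization by exchange arguments. Since $\tilde P(S)=0$ for $S\notin\mathcal S_{l,u}$ while all factors $P(A_i\mid S)$ (and, the $t_j$ being in $(0,1)$, also $\tilde P(S)$) are strictly positive on the nonempty set $\mathcal S_{l,u}$, the $\argmax$ over $\mathcal S$ is a nonempty subset of $\mathcal S_{l,u}$, on which $\ln\mathcal L(A,S,p,q,t)=\ln\tilde P(S)+\sum_{i=1}^n\ln P(A_i\mid S)$. Writing $x_j=1$ if $a_j\in S$ (and $0$ otherwise) and $a_{ij}=1$ if $a_j\in A_i$, I would expand both closed forms and collect all terms according to $x_j$: the prior contributes $\sum_j\ln(1-t_j)+\sum_j x_j\ln\frac{t_j}{1-t_j}$, and voter $i$ contributes a term independent of $S$ (its value at $x_j\equiv 0$) plus $\sum_j x_j\bigl(a_{ij}\ln\frac{p_i}{q_i}+(1-a_{ij})\ln\frac{1-p_i}{1-q_i}\bigr)$. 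Summing over $i$ and using $\sum_{i:\,a_j\notin A_i}\ln\frac{1-p_i}{1-q_i}=\sum_{i=1}^n\ln\frac{1-p_i}{1-q_i}-\sum_{i:\,a_j\in A_i}\ln\frac{1-p_i}{1-q_i}$, the coefficient of $x_j$ collapses to exactly $\ln\frac{t_j}{1-t_j}+\sum_{i:\,a_j\in A_i}\ln\frac{p_i(1-q_i)}{q_i(1-p_i)}-\tau_n=app_w(a_j)-\tau_n$. Hence $\ln\mathcal L(A,S,p,q,t)=C+\sum_{a_j\in S}\bigl(app_w(a_j)-\tau_n\bigr)$ with $C$ independent of $S$, and the problem reduces to maximizing $f(S):=\sum_{a_j\in S}\bigl(app_w(a_j)-\tau_n\bigr)$ over $S\in\mathcal S_{l,u}$.

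For a fixed cardinality $k$, $f$ is maximized precisely by the sets made of $k$ alternatives with the highest values of $app_w$, with common value $g(k)$; so every maximizer is such a ``top-$k$'' set for some $k\in[l,u]$. Because $app_w(a_1)\ge\cdots\ge app_w(a_m)$, the increments $g(k)-g(k-1)=app_w(a_k)-\tau_n$ are non-increasing, being positive on $S_{max}^{\tau_n}$, zero on $S_{tie}^{\tau_n}$, and negative on $S_{min}^{\tau_n}$; thus $g$ is maximized over $[l,u]$ exactly on the interval one reads off by comparing $l$ and $u$ with $k_{max}^{\tau_n}$ and $k_{max}^{\tau_n}+k_{tie}^{\tau_n}$. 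For the ``if'' direction I would simply check, by the case split on whether $k_{max}^{\tau_n}\gtrless u$ and $l\gtrless k_{max}^{\tau_n}+k_{tie}^{\tau_n}$, that any top-$k$ set with $k\in[l,u]$ obeying \eqref{constraints} necessarily contains all of $S_{max}^{\tau_n}$ (or the $u$ highest-scoring ones when $k_{max}^{\tau_n}>u$), contains from $S_{min}^{\tau_n}$ only the $\max(0,l-k_{tie}^{\tau_n}-k_{max}^{\tau_n})$ highest-scoring alternatives forced by the lower bound, and hence has $f$-value equal to $\max_{k'\in[l,u]}g(k')$, i.e. the optimal value.

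For the converse I expect the exchange arguments to be the main obstacle. Let $\Tilde S$ be a maximizer and $k=|\Tilde S|\in[l,u]$. The bound $|\Tilde S\cap S_{max}^{\tau_n}|\le\min(u,k_{max}^{\tau_n})$ is immediate; if it were strict, take $a\in S_{max}^{\tau_n}\setminus\Tilde S$ and either adjoin it (if $k<u$) or swap it for some $b\in\Tilde S\setminus S_{max}^{\tau_n}$, which exists since then $k=u>|\Tilde S\cap S_{max}^{\tau_n}|$ — in both cases $f$ strictly increases since $app_w(a)>\tau_n\ge app_w(b)$, a contradiction, so the first equality of \eqref{constraints} holds. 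Symmetrically, $|\Tilde S\cap S_{min}^{\tau_n}|\ge\max\bigl(0,l-k_{tie}^{\tau_n}-k_{max}^{\tau_n}\bigr)$ follows from $|\Tilde S|\ge l$ together with $|\Tilde S\cap S_{max}^{\tau_n}|\le k_{max}^{\tau_n}$, $|\Tilde S\cap S_{tie}^{\tau_n}|\le k_{tie}^{\tau_n}$; if it were strict, take $a\in\Tilde S\cap S_{min}^{\tau_n}$ and either delete it (if $k>l$) or swap it for some $b\in(S_{max}^{\tau_n}\cup S_{tie}^{\tau_n})\setminus\Tilde S$, which exists by a counting argument because then $k=l$ and $|\Tilde S\cap(S_{max}^{\tau_n}\cup S_{tie}^{\tau_n})|=l-|\Tilde S\cap S_{min}^{\tau_n}|<k_{max}^{\tau_n}+k_{tie}^{\tau_n}$ — again $f$ strictly increases since $app_w(b)\ge\tau_n>app_w(a)$, so the second equality of \eqref{constraints} holds. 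Finally, with these two intersection sizes pinned down, one more one-element exchange inside each of $S_{max}^{\tau_n}$ and $S_{min}^{\tau_n}$ forces these intersections to be the highest-scoring alternatives of the respective blocks, so $\Tilde S$ is indeed a set of the $k$ largest values of $app_w$ (its members in $S_{tie}^{\tau_n}$ being interchangeable). The fiddly points are producing the exchange partner in the boundary cases $k=u$ and $k=l$, and tracking the case split that makes the $\min$ and $\max$ in \eqref{constraints} appear.
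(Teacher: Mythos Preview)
Your proposal is correct and follows essentially the same route as the paper: both reduce the single-instance log-likelihood to the additive form $C+\sum_{a_j\in S}(app_w(a_j)-\tau_n)$ and then settle the constrained maximization by add/delete/swap exchange arguments on $S_{max}^{\tau_n}$, $S_{tie}^{\tau_n}$, $S_{min}^{\tau_n}$. The only cosmetic differences are that you introduce $g(k)$ and its non-increasing increments to organize the case split, and you verify the ``if'' direction by computing $f(\Tilde S)=\max_{k'}g(k')$ directly, whereas the paper instead compares $\Tilde S$ with an already-known maximizer $S_M$ and matches their intersections with $S_{max}^{\tau_n}$ and $S_{min}^{\tau_n}$.
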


So the estimator $\tilde{S}$ is made of some top-$k$ alternatives, where the possible values of $k$ are determined by Eq.~(\ref{constraints}). The first equation imposes that $\Tilde{S}$ includes as many elements as possible from $S_{max}^{\tau_n}$ (without exceeding the upper-bound $u$), whereas the second one imposes that $\Tilde{S}$ includes as few elements as possible from $S_{min}^{\tau_n}$ (without getting below the lower-bound $l$). An example is included in the appendix.

\begin{proof}
Since $\tilde{P}(S)>0 \iff S \in \mathcal{S}_{l,u}$, we have that $\argmax_{S\in \mathcal{S}} L(S)= \argmax_{S\in \mathcal{S}_{l,u}} L(S)$.
Moreover, we have that for any $S \in \mathcal{S}_{l,u}$:
\begin{small}{\allowdisplaybreaks
\begin{align*}
L(S)& = \tilde{P}(S)\prod_{i=1}^n p_i^{|A_i\cap S|}q_i^{|A_i\cap \overline{S}|}(1-p_i)^{|\overline{A_i}\cap S|}(1-q_i)^{|\overline{A_i}\cap \overline{S}|}\\
 & = \tilde{P}(S)\prod_{i=1}^n p_i^{|A_i\cap S|}q_i^{|A_i|-|A_i\cap S|}(1-p_i)^{|S|-|A_i\cap S|}\\
 & \mbox{~~~~~~~~~~~~~~~~~~~~~}(1-q_i)^{|\overline{A_i}|-|S|+|A_i\cap S|}\\
 & \propto \tilde{P}(S) \prod_{i=1}^n \left[\frac{1-p_i}{1-q_i}\right]^{|S|}\left[\frac{p_i(1-q_i)}{q_i(1-p_i)}\right]^{|A_i \cap S|}\\
 & \propto \frac{1}{\beta}\prod_{a_j \in S} t_j \prod_{a_j \notin S} (1-t_j) \prod_{i=1}^n \left[\frac{1-p_i}{1-q_i}\right]^{|S|}\left[\frac{p_i(1-q_i)}{q_i(1-p_i)}\right]^{|A_i \cap S|}\\
  & \propto \prod_{a_j \in S} \frac{t_j}{1-t_j}\prod_{i=1}^n \left[\frac{1-p_i}{1-q_i}\right]^{|S|}\left[\frac{p_i(1-q_i)}{q_i(1-p_i)}\right]^{|A_i \cap S|} 
  \end{align*}}
\end{small}

Thus the log-likelihood reads:

\allowdisplaybreaks
\begin{small}
\begin{align*}
    l(S) & = \sum_{a_j\in S} \ln \frac{t_j}{1-t_j}+\sum_{i=1}^n |S| \ln \frac{1-p_i}{1-q_i}+ |A_i \cap S|\ln \frac{p_i(1-q_i)}{q_i(1-p_i)}\\
     & = \sum_{a_j \in S} \left[ \overbrace{\underbrace{\ln \frac{t_j}{1-t_j}+\sum_{i: a_j \in A_i} \ln \frac{p_i(1-q_i)}{q_i(1-p_i)}}_{app_w(a_j)}-\underbrace{\sum_{i=1}^n\ln \frac{1-q_i}{1-p_i}}_{\tau_n} }^{l(a_j)}\right]
\end{align*}

\end{small}
This means that $a\in S_{max}^{\tau_n}$ if and only if $l(a)>0$ , $a\in S_{min}^{\tau_n}$ if and only if $l(a)<0$ and $a\in S_{tie}^{\tau_n}$ if and only if $l(a)=0$.
Now, let $S_M$ be a maximizer of the likelihood.
Since $l(a_j)\geq l(a_h) \iff app_w(a_j)\geq app_w(a_h)$ we have that $S_M$, which maximizes $\sum_{a_j \in S} l(a_j)$, is made of top-$k$ alternatives for some $k \in [l \twodots u]$.

Furthermore, $|S_M\cap S_{min}^{\tau_n}| =\max(0,l-k_{tie}^{\tau_n}-k_{max}^{\tau_n})$. Start by noticing that $|S_M\cap S_{min}^{\tau_n}| \geq \max(0,l-k_{tie}^{\tau_n}-k_{max}^{\tau_n})$, since
$|S_M\cap S_{min}^{\tau_n}|\geq l-|S_M\cap S_{max}^{\tau_n}|-|S_M\cap S_{tie}^{\tau_n}|\geq l-k_{max}^{\tau_n}-k_{tie}^{\tau_n} $.
Suppose that $|S_M\cap S_{min}^{\tau_n}| > \max(0,l-k_{tie}^{\tau_n}-k_{max}^{\tau_n})$. Then we have that $|S_M|>l$ because otherwise, if $|S_M|=l$, then $|S_M \cap S_{max}^{\tau_n}|+|S_M \cap S_{tie}^{\tau_n}|=l-|S_M \cap S_{min}^{\tau_n}|<k_{max}^{\tau_n}+k_{tie}^{\tau_n}$, which would mean that there are elements in $S_{tie}^{\tau_n}$ and $S_{max}^{\tau_n}$ which are not in $S_M$, which is a contradiction since $|S_M \cap S_{min}^{\tau_n}|>0$ and $S_M$ is a top-$k$ set. Now consider $a\in S_M \cap S_{min}^{\tau_n}$, we have that $|S_M\backslash\{a\}|\geq l$ and $l(S_M)=l(S_M\backslash\{a\})+l(a)<l(S_M\backslash\{a\})$ which is a contradiction.

With the same idea we can prove that $|S_M\cap S_{max}^{\tau_n}| =\min(u,k_{max}^{\tau_n})$. 

Conversely, consider an admissible set $S$ of top-$k$ alternatives that verifies the constraints (\ref{constraints}). Let $S_M$ be a MLE which, by the first part of the proof, is a top-$k'$ set that also satisfies the same constraints (\ref{constraints}). Thus we have that $|S_M\cap S_{max}^{\tau_n}| =|S\cap S_{max}^{\tau_n}| =\min(u,k_{max}^{\tau_n})$, and since $S$ and $S_M$ are top-$k$ and top-$k'$ sets, we have that $S\cap S_{max}^{\tau_n}=S_M\cap S_{max}^{\tau_n}$. Similarly we have that $S\cap S_{min}^{\tau_n}=S_M\cap S_{min}^{\tau_n}$. This suffices to prove that $l(S)=l(S_M)$ is maximal.
\end{proof}

Notice that when $(l,u)=(0,m)$, the problem degenerates into a collection of label-wise problems, one for each alternative: $a_j$ is selected if $a_j \in S_{max}^{\tau_n}$, rejected if $a_j \in S_{min}^{\tau_n}$, and those that are on the fence can be arbitrarily selected or not.

\begin{example}\label{example constraints}
Consider $5$ alternatives $\mathcal{A}=\{a,b,c,d,e\}$ and $10$ voters $\Voters$ all sharing the same parameters $(p,q) = (0.7,0.4)$. We thus have that all voters share the same weight $w=ln\left(\frac{p(1-q)}{q(1-p)}\right)=1.25$ and $\tau_n=\sum_{i=1}^n ln\left(\frac{1-q}{1-p}\right) = 6.93$. We consider the constraints $(l,u)=(1,4)$

First, suppose that $t_d = 0.6$ and that $t_j =0.5$ for all the remaining candidates. Consider also the approval counts (and weighted approval scores) in the table below.\smallskip

    \begin{tabular}{|c|c|c|c|c|c|}
  \hline
  Candidate & a & b & c & d & e  \\
  \hline
  Approval count &9 & 8 & 7 & 5 & 5  \\
  \hline
  $app_w$ & 11.25 & 10 & 8.75 & 6.65 & 6.25\\
  \hline
\end{tabular}\smallskip

We can easily check, by Theorem \ref{constrained} that $\Tilde{S}=\argmax_{S \in \mathcal{S}} P(S=S^*|A)=\{a,b,c\}$. We have that $S_{max}^{\tau_n}=\{a,b,c\}, S_{tie}^{\tau_n}=\emptyset $ and $S_{min}^{\tau_n}=\{d,e\}$. We know that there exists some $k \in [1,4]$ such that $\tilde{S}$ would consist of the top $k$ alternatives.
We also have that:
\begin{small}
$$
\left\{
    \begin{array}{cl}
        |\tilde{S}\cap S_{max}^{\tau_n}| & =\min(u,k_{max}^{\tau_n})=3  \implies \{a,b,c\} \subseteq \tilde{S} \\
        |\tilde{S}\cap S_{min}^{\tau_n}| & =\max(0,l-k_{tie}^{\tau_n}-k_{max}^{\tau_n})=0  \implies d,e\notin\tilde{S}
        \end{array}
\right. 
$$
\end{small}
So the only possibility is $\tilde{S}=\{a,b,c\}$.
\end{example}

\subsection{Estimating the Parameters Given the Ground Truth}\label{subsec:pa-gt}
\subsubsection{Estimating the prior parameters over alternatives}

Once the ground truths are  estimated at one iteration of the algorithm, the next step consists in estimating the prior parameters $(t_j)_{j \in \mathcal{A}}$, with the ground truths being given (in Subsection \ref{subsec:amle} the ground truth will be replaced by its estimation at each iteration). The next proposition explicits the closed-form expression of the 
MLE of the prior parameter of each alternative given the ground truth of each instance $S^*_z$ once the prior parameters of all other alternatives are fixed.
\begin{compactitem}
    \item Input: Approval profile $(A_1,\dots,A_n)$, ground truths $S^*_z$, and all but one prior parameters  $(t_h)_{h \neq j}$.
    \item Output: MLE of $t_j$.
\end{compactitem}
\begin{proposition}\label{prior}
For every $a_j \in \mathcal{A}$: 
$$\argmax_{t \in (0,1)} \mathcal{L}(A,S,p,q,t,t_{-j}) = \frac{occ(j)\overline{\alpha}_j}{(L-occ(j))\underline{\alpha}_j + occ(j)\overline{\alpha}_j}$$
$$
\mbox{where: }\left\{
    \begin{array}{lll}
        \overline{\alpha}_j &=\sum\limits_{\substack{S\in \mathcal{S}_{l,u} \\ a_j\in S}} \prod\limits_{\substack{a_h \in S \\ h\neq j}} t_h \prod\limits_{a_h \notin S} (1-t_h)\\
        \underline{\alpha}_j & =\sum\limits_{\substack{S\in \mathcal{S}_{l,u} \\ a_j\notin S}} \prod\limits_{a_h \in S} t_h \prod\limits_{\substack{a_h \notin S \\ h\neq j}} (1-t_h) \\
        occ(j) & = \left|z \in \{1,\dots,L\}, a_j \in S_z \right|
        \end{array}
\right.$$
\end{proposition}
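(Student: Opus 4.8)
The plan is to isolate the dependence of the total likelihood $\mathcal{L}(A,S,p,q,t,t_{-j})$ on the single free variable $t := t_j$, holding everything else fixed, and then maximize a one-variable function. First I would note that the votes factor $\prod_{z,i} P(A_i^z \mid S_z)$ does not depend on $t$ at all (it depends only on the ground truths and the $(p_i,q_i)$), so maximizing $\mathcal{L}$ over $t$ is the same as maximizing $\prod_{z=1}^L \tilde P(S_z)$ over $t$. Using the closed form $\tilde P(S) = \frac{1}{\beta(l,u,t)}\prod_{a_h\in S}t_h\prod_{a_h\notin S}(1-t_h)$ for $S\in\mathcal S_{l,u}$ (and all the $S_z$ are admissible, so we never hit the zero case), the product over instances becomes $\beta(l,u,t)^{-L}\prod_{z}\big(\prod_{a_h\in S_z}t_h\prod_{a_h\notin S_z}(1-t_h)\big)$.

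Next I would separate the $t_j$-dependence. In each instance either $a_j\in S_z$, contributing a factor $t$, or $a_j\notin S_z$, contributing a factor $1-t$; by definition $occ(j)$ counts the former, so $\prod_z(\cdots) = t^{\,occ(j)}(1-t)^{\,L-occ(j)}\cdot C$ for a constant $C>0$ independent of $t$. For the normalizing constant, I would split the sum defining $\beta(l,u,t)$ according to whether $a_j\in S$: the terms with $a_j\in S$ each carry a factor $t_j = t$ times $\prod_{a_h\in S, h\ne j}t_h\prod_{a_h\notin S}(1-t_h)$, whose sum over such $S$ is exactly $\overline{\alpha}_j$; the terms with $a_j\notin S$ each carry a factor $(1-t)$ times the remaining product, summing to $\underline{\alpha}_j$. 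Hence $\beta(l,u,t) = t\,\overline{\alpha}_j + (1-t)\,\underline{\alpha}_j$, where crucially $\overline{\alpha}_j$ and $\underline{\alpha}_j$ do not involve $t$. So, up to the positive constant $C$, we must maximize
\[
f(t) = \frac{t^{\,occ(j)}(1-t)^{\,L-occ(j)}}{\big(t\,\overline{\alpha}_j + (1-t)\,\underline{\alpha}_j\big)^{L}}
\]
over $t\in(0,1)$.

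Then I would just do first-order optimization: take $\ln f(t) = occ(j)\ln t + (L-occ(j))\ln(1-t) - L\ln(t\overline{\alpha}_j + (1-t)\underline{\alpha}_j)$, differentiate, and set the derivative to zero. This gives
\[
\frac{occ(j)}{t} - \frac{L-occ(j)}{1-t} - \frac{L(\overline{\alpha}_j - \underline{\alpha}_j)}{t\overline{\alpha}_j + (1-t)\underline{\alpha}_j} = 0 .
\]
Multiplying through and simplifying, the terms proportional to $L$ should cancel in a way that leaves a linear equation in $t$ whose solution is exactly $t = \frac{occ(j)\,\overline{\alpha}_j}{(L-occ(j))\,\underline{\alpha}_j + occ(j)\,\overline{\alpha}_j}$; I would present this simplification and check it lies in $(0,1)$ (it does, as a ratio of the form $\frac{a}{a+b}$ with $a,b\ge 0$, assuming $0<occ(j)<L$; the boundary cases $occ(j)\in\{0,L\}$ push the optimum to the boundary and can be mentioned separately or excluded since parameters are taken in $(0,1)$). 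Finally I would confirm this critical point is a maximizer — either by checking the sign of $\ln f'$ on either side, or by noting $\ln f\to-\infty$ at both ends of $(0,1)$ so the unique interior critical point is the global max.

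The only mildly delicate step is the algebraic cancellation that collapses the first-order condition into a linear equation; everything else (the factorization of the likelihood, the split of $\beta$ into $\overline{\alpha}_j,\underline{\alpha}_j$) is bookkeeping. I would carry out that cancellation carefully, perhaps by clearing denominators and grouping the $L\overline{\alpha}_j$ and $L\underline{\alpha}_j$ coefficients, since a sign slip there is the easiest way to get the wrong formula.
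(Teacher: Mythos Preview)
Your proposal is correct and follows essentially the same route as the paper: isolate the $t_j$-dependence of the likelihood (drop the $(p_i,q_i)$-dependent factor, write $\beta = t\,\overline{\alpha}_j + (1-t)\,\underline{\alpha}_j$, collect $t^{occ(j)}(1-t)^{L-occ(j)}$), take the log, differentiate, solve the resulting equation, and conclude uniqueness from the boundary behavior of the log-likelihood. Your explicit decomposition of $\beta$ and your remark on the degenerate cases $occ(j)\in\{0,L\}$ are a bit more careful than the paper's write-up, but the argument is the same.
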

Notice that $\overline{\alpha}_j=P(l \leq |S^*| \leq u|a_j \in S^*)$ and $\underline{\alpha}_j=P(l \leq |S^*|\leq u|a_j \notin S^*)$ so $\beta=\overline{\alpha}_jt_j+\underline{\alpha}_j(1-t_j)$. $occ(j)$ is the number of instances whose ground truth contains $a_j$.

\begin{proof}
Fix all sets $S_z \in \mathcal{S}_{l,u}$ and all the noise parameters $(p_i,q_i)_i$ and all the prior parameters $t_h$ but for one $t_j$  for some $j\leq m$, and let $t \in (0,1)$:
\begin{small}
\begin{equation*}
    \begin{split}
      \mathcal{L}(S,t,t_{-j})  & \propto \prod_{z=1}^L \frac{1}{\beta(l,u,t)}\prod_{a_h \in S_z} t_h \prod_{a_h \notin S_z} (1-t_h)\\
       & \propto \prod_{z=1}^L\frac{1}{\beta(l,u,t,t_{-j})}\prod_{a_h \in S_z}t_h \prod_{a_h\notin S_z} (1-t_h)\\
       & \propto \left(\frac{1}{\beta(l,u,t,t_{-j})}\right)^L \underbrace{\prod_{z: a_j \in S_z}t}_{t^{occ(j)}} \underbrace{\prod_{z: a_j \notin S_z} (1-t)}_{(1-t)^{L-occ(j)}}\\
    \end{split}
\end{equation*}
\end{small}
Taking the log we can write the function as:
$$\ell(t)=-L \log \beta + occ(j) \log t + (L-occ(j)) \log (1-t) $$
Its derivative reads:
$$\frac{\partial l}{\partial t} = -L \frac{\underline{\alpha}_j-\overline{\alpha}_j}{\underline{\alpha}_jt+\overline{\alpha}_j (1-t)}+occ(j)\frac{1}{t}+(occ(j)-L)\frac{1}{1-t} $$
Canceling it, we obtain:
$$ {t=\frac{occ(j)\overline{\alpha}_j}{(L-occ(j))\underline{\alpha}_j + occ(j)\overline{\alpha}_j}} $$
The derivative vanishes in a single point in $(0,1)$ and $ \lim_{t\to0} \ell(t)$ $= \lim_{t\to 1}\ell(t)=-\infty $ thus $\ell$ reaches a unique maximum. 
\end{proof}
We will see later that the algorithm applies Proposition \ref{prior} sequentially to estimate the alternatives' parameters one by one (see Example \ref{example amle}).

\subsubsection{Estimating the voter parameters}
Once the ground truths are known (or estimated), we can estimate the voters' parameters $(p,q)$. 
\begin{compactitem}
    \item Input: Instances $(A^1,\dots,A^L)$, ground truths $(S^*_1,\dots,S^*_L)$.
    \item Output: MLE of voter reliabilities $(p,q)$.
\end{compactitem}
The next result simply states that the maximum likelihood estimator of $p_i$ of some voter is the fraction of alternatives that the voter approves and that actually belong to the ground truth; the estimation of $q_i$ is similar. See Example \ref{example amle}. 

\begin{proposition}\label{pq}
Fix sets $S_z \in \mathcal{S}_{l,u}$ and prior parameters $t_j$. Then:
$$\argmax_{(p,q) \in (0,1)^{2 n}} \mathcal{L}(A,S,p,q,t) = (\hat{p},\hat{q})$$
where:
$ \hat{p}_i  = \frac{\sum_{z \in L} |A_i^z\cap {S}_z |}{\sum_{z \in L} |{S}_z |}  
        ,\hat{q}_i  = \frac{\sum_{z \in L} |A_i^z\cap \overline{{S}_z} |}{\sum_{z \in L} |\overline{{S}_z }|}$
\end{proposition}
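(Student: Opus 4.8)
The plan is to exploit the fact that the prior term $\tilde P(S_z)$ does not involve $(p,q)$, so maximizing $\mathcal L(A,S,p,q,t)$ over $(p,q)$ reduces to maximizing $\prod_{z=1}^L\prod_{i=1}^n P(A_i^z\mid S_z)$. Substituting the closed form of $P(A_i^z\mid S_z)$ and taking logarithms, the objective becomes, up to an additive constant independent of $(p,q)$,
\[
\sum_{i=1}^n\Big[a_i\log p_i+b_i\log(1-p_i)+c_i\log q_i+d_i\log(1-q_i)\Big],
\]
where $a_i=\sum_z|A_i^z\cap S_z|$, $b_i=\sum_z|\overline{A_i^z}\cap S_z|$, $c_i=\sum_z|A_i^z\cap\overline{S_z}|$, $d_i=\sum_z|\overline{A_i^z}\cap\overline{S_z}|$, and where $a_i+b_i=\sum_z|S_z|$ and $c_i+d_i=\sum_z|\overline{S_z}|$ by disjoint decomposition of $S_z$ and $\overline{S_z}$.

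The key observation is that this sum is \emph{separable}: it splits into $2n$ independent one-dimensional problems, each of the form $\max_{x\in(0,1)}g(x)$ with $g(x)=\mu\log x+\nu\log(1-x)$ for nonnegative integers $\mu,\nu$. Since $g''(x)=-\mu/x^2-\nu/(1-x)^2<0$, $g$ is strictly concave on $(0,1)$; when $\mu,\nu\geq 1$ it tends to $-\infty$ at both endpoints, so it attains its unique maximum at the unique zero of $g'(x)=\mu/x-\nu/(1-x)$, namely $x=\mu/(\mu+\nu)$. Applying this with $(\mu,\nu)=(a_i,b_i)$ gives $\hat p_i=\sum_z|A_i^z\cap S_z|/\sum_z|S_z|$, and with $(\mu,\nu)=(c_i,d_i)$ gives $\hat q_i=\sum_z|A_i^z\cap\overline{S_z}|/\sum_z|\overline{S_z}|$. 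Because the $2n$ coordinates are optimized independently, $(\hat p,\hat q)$ is the unique maximizer on $(0,1)^{2n}$.

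The only delicate point, and the one I would treat with a bit of care, is the boundary behaviour: the concavity argument yields an interior maximizer only when the relevant counts satisfy $\mu,\nu\geq 1$. The denominators $\sum_z|S_z|$ and $\sum_z|\overline{S_z}|$ are positive as soon as not all ground truths are empty (e.g. $l\geq 1$), resp. not all are full (e.g. $u\leq m-1$); otherwise the instance is degenerate and excluded. If some voter approves no correct alternative over all instances ($a_i=0$), or misses none ($b_i=0$), and symmetrically for $q_i$, then $g$ is monotone and the supremum is only approached on the boundary of $(0,1)^{2n}$; in that non-generic situation the stated formula must be read as the limiting value, or one restricts to profiles in which every voter has at least one correct approval and one miss. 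Modulo this caveat the remaining computations are routine.
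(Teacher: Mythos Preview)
Your proof is correct and follows essentially the same approach as the paper: use separability of the log-likelihood across voters (and across $p_i$ versus $q_i$), reduce each coordinate to a one-dimensional problem of the form $\mu\log x+\nu\log(1-x)$, and read off the maximizer from the first-order condition. Your treatment is in fact more careful than the paper's sketch, since you make the concavity argument explicit and flag the boundary cases where some count vanishes; the paper simply notes that the derivative vanishes at $\hat p_i$ and leaves the rest implicit.
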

The (simple) proof is omitted; it is in the Appendix.

\subsection{Alternating Maximum Likelihood Estimation}\label{subsec:amle}

\begin{algorithm}[t]
\caption{\textit{AMLE} procedure 
}
\label{algo}
$\begin{array}{ll}
\textbf{Input:} & \mbox{Approval ballots $(A_i^z)_{1\leq z \leq L , i \in \Voters}$}\\
& \mbox{Initial parameters $\hat{\theta}^{(0)}$}, \mbox{Bounds $(l,u)$}
, \mbox{Tolerance $\varepsilon$}\\
\textbf{Output:} &\mbox{Estimations $(\hat{S}_z), (\hat{p}_i,\hat{q}_i), (\hat{t}_j)$} 
\end{array}$

\begin{algorithmic} 
\REPEAT
\FOR{ $z = 1 \dots L$}
    \STATE Compute $\hat{S}_z^{(v+1)} = \{a_1, \dots, a_k\}$ with $k\in [l,u]$ and:
    $$
\left\{
    \begin{array}{cl}
        |\hat{S}_z^{(v+1)}\cap S_{max,z}^{(v)}| & =\min(u,k_{max,z}^{(v)}) \\
        |\hat{S}_z^{(v+1)}\cap S_{min,z}^{(v)}| & =\max(0,l-k_{tie,z}^{(v)}-k_{max,z}^{(v)})
        \end{array}
\right.$$
\ENDFOR
 
\FOR{ $i=1 \dots \Voters$}
    \STATE Update the parameters $(p_i,q_i)$ given $\hat{S}^{(v+1)}$:
    \begin{small}
        $$
        \hat{p}_i^{(v+1)}  = \frac{\sum\limits_{z \in L} |A_i^z\cap \hat{S}_z^{(v+1)} |}{\sum\limits_{z \in L} |\hat{S}_z^{(v+1)} |}  ,
        \hat{q}_i^{(v+1)}  = \frac{\sum\limits_{z \in L} |A_i^z\cap \overline{\hat{S}_z^{(v+1)}} |}{\sum\limits_{z \in L} |\overline{\hat{S}_z^{(v+1)} }|}
    $$
    \end{small}
\ENDFOR
\FOR{$j = 1 \dots m$}
\STATE Update $\hat{t}_j^{(v+1)}$ by:
\begin{small}
$$ \hat{t}_j^{(v+1)} = \frac{occ^{(v+1)}(j) \overline{\alpha}_j^{(v+1)}}{occ^{(v+1)}(j) \overline{\alpha}_j^{(v+1)} + (L - occ^{(v+1)}(j)) \underline{\alpha}_j^{(v+1)}}$$
\end{small}
where : 
$$\left\{
    \begin{array}{ll}
        occ^{(v+1)}(j) & =\sum_{z=1}^L \mathds{1}\{a_j \in \hat{S}_z^{(v+1)}\}  \\
        \overline{\alpha}_j^{(v+1)} & = \beta((l-1)^+,u-1,\hat{t}^{(v+1)}_{<j},\hat{t}^{(v)}_{>j})\\
        \underline{\alpha}_j^{(v+1)} & = \beta(l,u,\hat{t}^{(v+1)}_{<j},\hat{t}^{(v)}_{>j})
    \end{array}
\right.$$
\ENDFOR
\UNTIL{
$|| \hat{\theta}^{(v+1)}-\hat{\theta}^{(v)} || \leq \varepsilon$}
\end{algorithmic}
\end{algorithm}
Now the estimation of the ground truths and that of the parameters are intertwined to maximize the overall likelihood $\mathcal{L}(A,S,p,q,t)$ by the \emph{Alternating Maximum Likelihood Estimation algorithm}. AMLE is an iterative procedure similar to the \textit{Expectation-Maximization} procedure introduced in \cite{Distill2011} but with a coordinate-steepest-ascent-like iteration, whose aim is to intertwinedly estimate the voter reliabilities, the alternatives' prior parameters and the instances' ground truths. The idea behind this estimation 
consists in alternating a MLE of the ground truths given the current estimate of the parameters, and an updating of these parameters via a MLE based on the current estimate of the ground truths.\footnote{In case of ties between subsets when estimating the ground truth, a tie-breaking 
priority over subsets is used. 
No ties occurred in our experiments.}
Each of these steps have been discussed in the previous subsections and are now incorporated into Algo.~\ref{algo}.

The algorithm continues to run until a convergence criterion is met in the form of a bound on the norm of the change in the parameters' estimations. 
In practice we chose $\ell_\infty$, but any other norm could be used in Algorithm
~\ref{algo} as in finite dimensions, all norms are equivalent (if a sequence converges according to one norm then it does so for any norm).


We define the vector of parameters $\hat{\theta}^{(v)} = (\hat{p}^{(v)},\hat{q}^{(v)},\hat{t}^{(v)})$ 
containing the voters' estimated noise parameters as well as the prior information estimated parameters at iteration $v$.
In particular $ \hat{\theta}^{(0)}$ is the input initial values. 
The choice of the exact initial values 
depends on the application at hand.

Note that at convergence, only local optimality is guaranteed, as classical in optimization.


\begin{theorem}\label{amle conv}
For any initial values $\hat{\theta}^{(0)}$,
\textit{AMLE} converges to a fixed point after a finite number of iterations.
\end{theorem}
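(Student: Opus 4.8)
The plan is a monotone-convergence argument in which the total likelihood $\mathcal{L}$ serves as a Lyapunov function, combined crucially with the finiteness of the space $\mathcal{S}_{l,u}^{L}$ in which the ground-truth estimates live. First I would verify that $\mathcal{L}(A,\hat S^{(v)},\hat p^{(v)},\hat q^{(v)},\hat t^{(v)})$ is non-decreasing in $v$: within one iteration, the ground-truth step replaces each $\hat S_z$ by a maximizer of $S\mapsto\mathcal{L}$ with everything else fixed (Theorem~\ref{constrained}, applied instance by instance, which is legitimate since $\mathcal{L}$ factorizes over instances); the $(p_i,q_i)$ step replaces $(\hat p,\hat q)$ by the exact joint maximizer over all voter parameters (Proposition~\ref{pq}); and each $\hat t_j$ step replaces $\hat t_j$ by the exact maximizer over that single coordinate (Proposition~\ref{prior}). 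Every sub-step therefore weakly increases $\mathcal{L}$, and since $\mathcal{L}\le 1$ the sequence $(\mathcal{L}^{(v)})_{v}$ converges; in particular its per-iteration increments tend to $0$.

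Next I would exploit finiteness to pin down the discrete component. The tuple $(\hat S_1^{(v)},\dots,\hat S_L^{(v)})$ ranges over the finite set $\mathcal{S}_{l,u}^{L}$; moreover $\hat p^{(v)},\hat q^{(v)}$ are explicit deterministic functions of this tuple (the closed forms of Proposition~\ref{pq}), $\hat t^{(v+1)}$ is a deterministic function of the tuple and of $\hat t^{(v)}$, and $\hat S^{(v+1)}$ is a deterministic function of $\hat\theta^{(v)}$ (Theorem~\ref{constrained} together with the fixed tie-breaking rule). So the whole run is a deterministic trajectory of one map along which $\mathcal{L}$ is non-decreasing. I would then argue that the ground-truth tuple is eventually constant: if it changed for infinitely many $v$, then — the parameter iterates lying in a compact subset of $(0,1)^{2n+m}$ under mild non-degeneracy of the data — one could extract a convergent subsequence of $\hat\theta^{(v)}$ along which the finite tuple is eventually constant; this would force the limiting parameters, hence the ground-truth argmax for all large $v$, to stabilize, contradicting the assumed infinitely many changes. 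Thus there is $V$ with $\hat S^{(v)}=\sigma$ for all $v\ge V$.

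Finally, for $v\ge V$ the pair $(\hat p^{(v)},\hat q^{(v)})$ equals the closed-form MLE associated with $\sigma$ and is therefore constant, while the $\hat t$ updates reduce to iterating a fixed coordinate-ascent step $\hat t^{(v+1)}=\Psi_\sigma(\hat t^{(v)})$ on the smooth, bounded profile log-likelihood of $t$ given $\sigma$, with iterates confined to a compact subset of $(0,1)^{m}$; this iteration converges. Hence $\hat\theta^{(v)}$ converges, so $\|\hat\theta^{(v+1)}-\hat\theta^{(v)}\|_\infty\to 0$, the stopping test of Algorithm~\ref{algo} is met after finitely many iterations, and the limit is a fixed point of the update map. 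I expect the second paragraph — showing that the ground-truth tuple genuinely stabilizes rather than oscillating forever among configurations that are momentarily tied in likelihood — to be the main obstacle; this is exactly where the finiteness of $\mathcal{S}_{l,u}$, which has no counterpart in ordinary EM, is what upgrades plain asymptotic convergence to termination after finitely many iterations.
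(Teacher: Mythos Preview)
Your monotonicity paragraph coincides with the paper's and is correct.

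Where you diverge --- and where the sketch does not close --- is the argument that the discrete component $\hat S^{(v)}$ eventually freezes. Your compactness route does not work as stated: a convergent subsequence of $\hat\theta^{(v)}$ along which $\hat S^{(v_k)}$ is constant tells you nothing about $\hat S^{(v)}$ at indices \emph{off} that subsequence, so no contradiction with ``$\hat S$ changes infinitely often'' follows; you would need convergence of the \emph{full} sequence $\hat\theta^{(v)}$, which is the conclusion, not a hypothesis. The paper replaces this with a short pigeonhole-plus-squeeze: since $\hat S^{(v)}$ ranges over a finite set, the trajectory must revisit a state; over any such cycle the non-decreasing likelihood is forced to be constant, so in particular $\mathcal{L}(A,\hat S^{(v+1)},\hat\theta^{(v)})=\mathcal{L}(A,\hat S^{(v)},\hat\theta^{(v)})$, whence $\hat S^{(v)}$ is also a maximizer of $S\mapsto\mathcal{L}(A,S,\hat\theta^{(v)})$ and the fixed tie-breaking rule gives $\hat S^{(v+1)}=\hat S^{(v)}$. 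No compactness and no limits are needed.

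Your third paragraph, by contrast, is \emph{more} careful than the paper: you correctly separate out that once $\hat S$ is frozen, $(\hat p,\hat q)$ freezes immediately (closed form in Proposition~\ref{pq}), while $\hat t$ still runs a coordinate ascent $\hat t^{(v+1)}=\Psi_\sigma(\hat t^{(v)})$ whose convergence must be argued before invoking the $\varepsilon$ stopping test. The paper simply asserts that once $\hat S^{(v+1)}=\hat S^{(v)}$ ``the estimators will remain the same,'' without isolating this residual $t$-iteration. So: swap your compactness argument for the paper's pigeonhole squeeze in paragraph~2, and keep your paragraph~3.
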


\begin{proof}
First we have by Theorem \ref{constrained}
that $ \mathcal{L}(A,\hat{S}^{(v+1)},\hat{\theta}^{(v)}) = \max_{S \in \mathcal{S}} \mathcal{L}(A,S,\hat{\theta}^{(v)})$,
and we have in particular that: 
$$\mathcal{L}(A,\hat{S}^{(v+1)},\hat{\theta}^{(v)})  \geq \mathcal{L}(A,\hat{S}^{(v)},\hat{\theta}^{(v)}) $$

To prove that $\mathcal{L}(A,\hat{S}^{(v+1)},\hat{\theta}^{(v+1)})  \geq \mathcal{L}(A,\hat{S}^{(v+1)},\hat{\theta}^{(v)})$ we use the fact that we update $(p,q,t)$ by their MLE. By Proposition \ref{pq} we have that
$(\hat{p}^{(v+1)},\hat{q}^{(v+1)}) = \argmax_{(p,q)} \mathcal{L}(A,\hat{S}^{(v+1)},p,q,\hat{t}^{(v)})$.
Also by Proposition \ref{prior}, and since we apply it sequentially to update $t_j$ we have:
$$ \mathcal{L}(A,\hat{S}^{(v+1)},\hat{\theta}^{(v+1)})  \geq \mathcal{L}(A,\hat{S}^{(v+1)},\hat{\theta}^{(v)})$$

To prove convergence, it suffices to show that $\hat{S}^{(v)}=\hat{S}^{(v+1)}$ for some $v$ (which guarantees the estimators staying unchanged hereafter). 
Notice that the ground truth has a finite number of possible values (exactly $2^{m L}$), leading
the algorithm to cycle at some iteration. For the sake of simplicity, suppose that this cycle is of length $2$, in other words, suppose that $\hat{S}^{(v+2)}=\hat{S}^{(v)}$ for some $v$; this also implies that $\hat{\theta}^{(v+2)}=\hat{\theta}^{(v)}$. So:
$$    \mathcal{L}(A,\hat{S}^{(v)},\hat{\theta}^{(v)})  = \mathcal{L}(A,\hat{S}^{(v+2)},\hat{\theta}^{(v+2)}) \geq \mathcal{L}(A,\hat{S}^{(v+1)},\hat{\theta}^{(v)})
$$
By optimality of $\hat{S}^{(v+1)}$, we have also that: 
$$ \mathcal{L}(A,\hat{S}^{(v+1)},\hat{\theta}^{(v)}) \geq \mathcal{L}(A,\hat{S}^{(v)},\hat{\theta}^{(v)})$$
Hence, we get that: $$\mathcal{L}(A,\hat{S}^{(v+1)},\hat{\theta}^{(v)}) = \mathcal{L}(A,\hat{S}^{(v)},\hat{\theta}^{(v)})$$
and thus,
$\hat{S}^{(v+1)}=\hat{S}^{(v)}=\argmax_{S \in \mathcal{S}_{l,u}} \mathcal{L}(A,S,\hat{\theta}^{(v)})$
and the estimators will remain the same after any number of iterations following $v$.
\end{proof}
Because $\mathcal{L}(A,\hat{S}^{(v+1)},\hat{\theta}^{(v+1)})  \geq \mathcal{L}(A,\hat{S}^{(v+1)},\hat{\theta}^{(v)}) \geq \mathcal{L}(A,\hat{S}^{(v)},\hat{\theta}^{(v)})$, the likelihood increases at each step of the algorithm. This guarantees that whenever the execution stops, the likelihood is closer to the maximum than it initially was. Therefore the algorithm can not only be run until convergence, but it can also be run as an anytime algorithm. 

\begin{example}\label{example amle}
Take $n = 3$, $m = 5$, $l=1$, $u=2$, 
$L=4$, and the following profile and initial parameters:
\begin{table}[h]
    \centering
    \begin{tabular}{|l|c|c|c|c|}
  \hline
 & $A^1$  & $A^2$ & $A^3$ & $A^4$ \\
  \hline
  Voter $1$ & $\{a_1,a_4\}$ & $\{a_1\}$ & $\{a_3\}$ & $\{a_1\}$ \\
  \hline
  Voter $2$ & $\{a_2\}$ & $\{a_5\}$ & $\{a_4\}$ & $\{a_1\}$\\
  \hline
  Voter $3$ & $\{a_2,a_3,a_4\}$ & $\{a_2,a_3,a_5\}$ & $\{a_2,a_3\}$ & $\{a_3\}$ \\
  \hline 
\end{tabular}
    \label{approval profile}
\end{table}
$$\left\{
    \begin{array}{lll}
        \hat{p}_1^{(0)} = 0.5 & \hat{p}_2^{(0)} = 0.5 & \hat{p}_3^{(0)} = 0.5  \\
        \hat{q}_1^{(0)} = 0.44 & \hat{q}_2^{(0)}  =0.41 & \hat{q}_3^{(0)}  =0.32 \\
        \hat{t}_1^{(0)}=\dots=\hat{t}_5^{(0)} &  =0.5
    \end{array}
\right.$$

\paragraph{Estimating the ground truth:} The first step 
is the application of Theorem \ref{constrained} to estimate the ground truth of the instances given the initial parameters, yielding
\begin{small}
${\hat{S}_1^{(1)}= \{a_2,a_4\}, \hat{S}_2^{(1)} = \{a_2,a_5\}, \hat{S}_3^{(1)} = \{a_2,a_3\}, \hat{S}_4^{(1)} = \{a_1,a_3\} }
$.
\end{small}
\paragraph{Estimating the voter reliabilities:} In the next step we use these estimates of the ground truths to compute the MLEs of the voter reliabilities.
For instance, voter $1$ has 2 false positive labels from a total of $12$ negative labels so $\hat{q}_1^{(1)}=\frac{2}{12}=0.17$ and she has 3 true positive labels out of 8 positive ones so $\hat{p}_1^{(1)}=\frac{3}{8}=0.38$. In the end, we get:
$$\left\{
    \begin{array}{lll}
        \hat{p}_1^{(1)}=0.38  & \hat{p}_2^{(1)}=0.38 & \hat{p}_3^{(1)}=0.88\\
        \hat{q}_1^{(1)}=0.17  & \hat{q}_2^{(1)}=0.08 & \hat{q}_3^{(1)}=0.17
    \end{array}
\right.$$


\paragraph{Estimating the prior parameters:} The final step of this iteration consists in updating the estimations of the prior parameters by applying Proposition \ref{prior} sequentially.
First we estimate $\hat{t}_1^{(1)}$ given $\hat{S}^{(1)}$ and $\hat{t}_2^{(0)},\dots,\hat{t}_5^{(0)}$ by maximum likelihood estimation. We first compute $\overline{\alpha}_1$, $\underline{\alpha}_1$ and $occ(a_1)$:
$$\left\{
    \begin{array}{ll}
        \overline{\alpha}_1 & = \beta(0,1,t_2,\dots,t_5)=0.3125\\
        \underline{\alpha}_1 & = \beta(1,2,t_2,\dots,t_5)=1\\
        occ(a_1) & = 1
    \end{array}
\right.$$
Then the maximum likelihood estimation of $t_1$ is:
$$\hat{t}_1 = \frac{occ(a_1)\overline{\alpha}_1}{(L-occ(a_1))\underline{\alpha}_1 + occ(a_1)\overline{\alpha}_1}= 0.09  $$
The next steps are to estimate $\hat{t}_2^{(1)}$ given $\hat{t}_1^{(1)},\hat{t}_3^{(0)},\hat{t}_4^{(0)},\hat{t}_5^{(0)}$ and so on. Finally, we get:
$$\hat{t}_1^{(1)} = 0.09,\hat{t}_2^{(1)} = 0.56 , \hat{t}_3^{(1)} = 0.28, \hat{t}_4^{(1)} = 0.14, \hat{t}_5^{(1)} = 0.20 $$

Fix $\varepsilon=10^{-5}$. We repeat all steps until convergence (according to $ \ell_{\infty}$),
after $5$ full iterations. In the fixed point, the estimations of the ground truths are:
$$\hat{S}_1 = \{a_2,a_3\},  \hat{S}_2 = \{a_2,a_3\}, \hat{S}_3 = \{a_2,a_3\}, \hat{S}_4 = \{a_3\}$$
\end{example}

\section{Experiments}\label{sec: experiments}
\subsection{Experiment Design and Data Collection}
We designed an image annotation task as a football quiz.\footnote{The annotations dataset and the code are available at: \url{https://github.com/taharallouche/Football-Quiz-Crowdsourcing}}. We selected $15$ pictures taken during different matches between two of the following teams: Real Madrid, Inter Milan, Bayern Munich, Barcelona, Paris Saint-Germain.
In each picture, it may be the case that players from both teams appear, or players from only one team, therefore 
$l=1$ and $u=2$. Each participant is shown the instances one by one, and is each time asked to select all
the teams she can spot (see Figure \ref{Quiz}). We 
designed a simple incentive for participants, consisting in ranking them according to the following principle:
\begin{compactitem}
    \item The participants get one point whenever their answer contains all correct alternatives for a picture. They are then ranked according to their cumulated points. 
    \item To break ties, the participant who selected a smaller number of alternatives overall is ranked first.
\end{compactitem}

We gathered the answers of $76$ participants \bt(only two of them spammed by simply selecting all the alternatives).\et
\begin{figure}[h]
     \centering
         \includegraphics[width=0.40\textwidth]{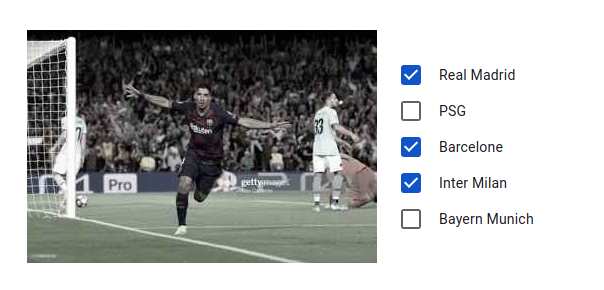}
    \caption{Example of Annotation Task}
    \label{Quiz}
\end{figure}

\subsection{Anna Karenina's Initialization}

Inspired by the \emph{Anna Karenina Principle} in~\cite{truth2019}, we assign more weight to voters who are \emph{closer} to the others on average, initializing the precision parameters $(p_i,q_i)$ accordingly. 
This suits our context, \bt where voter competence is highly polarized: some voters are experts and cast similar answers close to the ground truth, the others are less reliable and their answers are dispersed among all combinations. Details on the heuristic are 
in the Appendix.

\subsection{Results}

To assess the importance of 
prior information on the size of the ground truth, we tested the AMLE algorithm with free bounds $(l,u)=(0,m)$ (will be referred to as $\mbox{AMLE}_f$) and the $\mbox{AMLE}_c$ algorithm with $(l,u)=(1,2)$. 
We also apply the modal rule \cite{Evaluating2020} which 
outputs the subset of alternatives that most frequently appears as an approval ballot $\argmax_{S \in \mathcal{S}} 
\left|i \in \Voters, S=A_i \right|
$,  and a variant of label-wise majority rule which 
outputs the subset of alternatives $S$ such that
$a \in S \iff \left|i \in \Voters, a \in A_i \right| > \frac{n}{2}$. If this subset is empty it is replaced by the alternative with highest approval count, and if it has more than two alternatives then we only keep the top-2 alternatives. 


We took $20$ batches of $n=10$ to $n=74$ randomly drawn voters and applied the four methods to all of them (see Figure \ref{foot_voters_ham},\ref{foot_voters_01}). As classically done in the literature~\cite{aggregation2020}, we use the Hamming accuracy $\frac{1}{m L} \sum_{z=1}^L |S^*_z \cap \hat{S}^z|+|\overline{S^*_z} \cap \overline{\hat{S}^z}|  $
and the 0/1 accuracy
$\frac{1}{L} \sum_{z=1}^L \mathds{1}\{S^*_z = \hat{S}^z\} $  as metrics and report their 0.95 confidence intervals.

\begin{figure}
\centering
\begin{minipage}[b]{0.9\linewidth}
\centering
         \includegraphics[width=\textwidth]{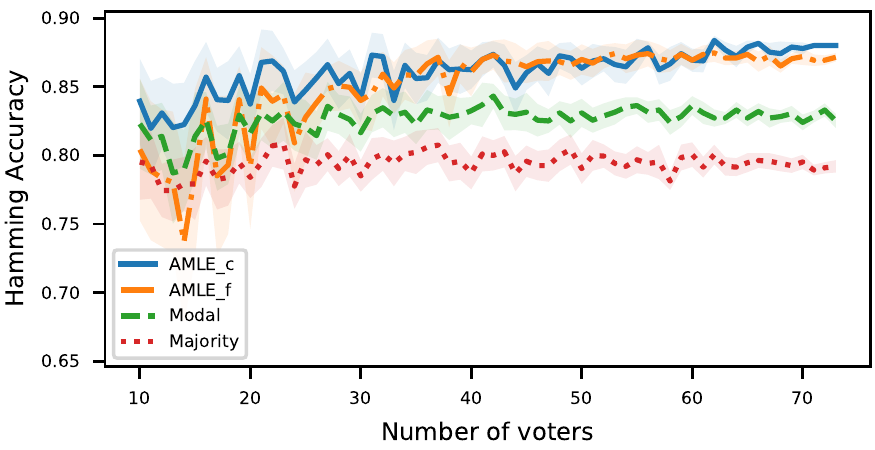}
             \subcaption{Hamming accuracy}
        \label{foot_voters_ham}
\end{minipage}
\begin{minipage}[b]{0.9\linewidth}
\centering
         \includegraphics[width=\textwidth]{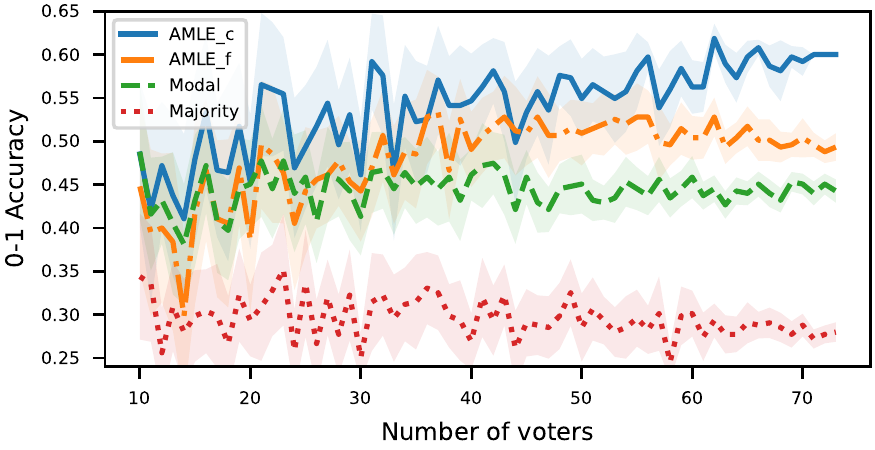}
        \subcaption{0/1 accuracy}
        \label{foot_voters_01}
\end{minipage}
\caption{Accuracies of different aggregation methods}
        \label{foot_voters}
\end{figure}

We notice 
that the majority and the modal rule are outperformed by AMLE, which can be explained by the fact that they do not take into account the voters' reliabilities. 
Comparing the performances of $\mbox{AMLE}_c$ and $\mbox{AMLE}_f$ emphasizes the importance of the prior knowledge on the committee size to improve the quality of the estimation.

\section{Conclusion}\label{conclusion}
We study multi-winner approval voting from an epistemic point of view. We propose a noise model that incorporates the prior belief about the size of the ground truth. Then we derive an iterative algorithm to intertwinedly estimate the ground truth labels, the voter noise parameters and the prior belief parameters and we prove its convergence. Our algorithm is based on a simplification of Expectation-Maximization (EM), and its simple steps are more easily explainable to voters than EM and other similar statistical learning approaches.

Although we mainly considered a general multi-instance task that fits the collective annotation framework, where each voter answers several questions on the same set of alternatives, we can nonetheless apply the same algorithm to single-instance problems (such as the allocation of scarce medical resources) where only one question is answered. In this case, the prior parameters cannot be updated and it suffices to fix them once and for all and alternate between the estimation of the ground truth and the voter parameters.

In some contexts ({\em e.g.}, patients in a hospital), alternatives and votes are not observed at once but streamed. To cope with this online setup we consider extending our AMLE algorithm in the spirit of \cite{online2009}.

\bibliographystyle{named}
\bibliography{ms.bib}

\begin{thebibliography}{}

\bibitem[\protect\citeauthoryear{Allouche \bgroup \em et al.\egroup
  }{2022}]{AlloucheLY22}
Tahar Allouche, J{\'e}r{\^o}me Lang, and Florian Yger.
\newblock Truth-tracking via approval voting: Size matters.
\newblock In {\em AAAI}, 2022.

\bibitem[\protect\citeauthoryear{Baharad \bgroup \em et al.\egroup
  }{2011}]{Distill2011}
Eyal Baharad, Jacob Goldberger, Moshe Koppel, and Shmuel Nitzan.
\newblock Distilling the wisdom of crowds: weighted aggregation of decisions on
  multiple issues.
\newblock {\em Autonomous Agents and Multi-Agent Systems}, 2011.

\bibitem[\protect\citeauthoryear{Ben-Yashar and Nitzan}{1997}]{Ben-YasharN97}
Ruth~C. Ben-Yashar and Shmuel~I. Nitzan.
\newblock The optimal decision rule for fixed-size committees in dichotomous
  choice situations: The general result.
\newblock {\em International Economic Review}, 1997.

\bibitem[\protect\citeauthoryear{Ben{-}Yashar and Paroush}{2001}]{optimal2001}
Ruth Ben{-}Yashar and Jacob Paroush.
\newblock Optimal decision rules for fixed-size committees in polychotomous
  choice situations.
\newblock {\em Soc. Choice Welf.}, 18(4):737--746, 2001.

\bibitem[\protect\citeauthoryear{Capp{\'e} and Moulines}{2009}]{online2009}
Olivier Capp{\'e} and Eric Moulines.
\newblock On-line expectation--maximization algorithm for latent data models.
\newblock {\em Journal of the Royal Statistical Society: Series B (Statistical
  Methodology)}, 71(3):593--613, 2009.

\bibitem[\protect\citeauthoryear{Caragiannis and Micha}{2017}]{learning2017}
Ioannis Caragiannis and Evi Micha.
\newblock Learning a ground truth ranking using noisy approval votes.
\newblock In {\em IJCAI}, 2017.

\bibitem[\protect\citeauthoryear{Caragiannis \bgroup \em et al.\egroup
  }{2020}]{Evaluating2020}
Ioannis Caragiannis, Christos Kaklamanis, Nikos Karanikolas, and George~A.
  Krimpas.
\newblock Evaluating approval-based multiwinner voting in terms of robustness
  to noise.
\newblock In {\em IJCAI}, 2020.

\bibitem[\protect\citeauthoryear{Condorcet}{1785}]{condorcet1785}
Condorcet.
\newblock {\em Essai sur l'application de l'analyse à la probabilité des
  décisions rendues à la pluralité des voix}.
\newblock 1785.

\bibitem[\protect\citeauthoryear{Conitzer and Sandholm}{2005}]{common2005}
Vincent Conitzer and Tuomas Sandholm.
\newblock Common voting rules as maximum likelihood estimators.
\newblock In {\em UAI}, 2005.

\bibitem[\protect\citeauthoryear{Conitzer \bgroup \em et al.\egroup
  }{2009}]{ConitzerRX09}
Vincent Conitzer, Matthew Rognlie, and Lirong Xia.
\newblock Preference functions that score rankings and maximum likelihood
  estimation.
\newblock In {\em IJCAI}, 2009.

\bibitem[\protect\citeauthoryear{Deng \bgroup \em et al.\egroup
  }{2014}]{scalable2014}
Jia Deng, Olga Russakovsky, Jonathan Krause, Michael~S. Bernstein, Alexander~C.
  Berg, and Fei{-}Fei Li.
\newblock Scalable multi-label annotation.
\newblock In {\em {CHI} Conference on Human Factors in Computing Systems},
  2014.

\bibitem[\protect\citeauthoryear{Dietrich}{2008}]{Premises2008}
Franz Dietrich.
\newblock The premises of condorcet's jury theorem are not simultaneously
  justified.
\newblock {\em Episteme}, 2008.

\bibitem[\protect\citeauthoryear{Drissi-Bakhkhat and
  Truchon}{2004}]{maximum2004}
Mohamed Drissi-Bakhkhat and Michel Truchon.
\newblock Maximum likelihood approach to vote aggregation with variable
  probabilities.
\newblock {\em Social Choice and Welfare}, 2004.

\bibitem[\protect\citeauthoryear{Elkind and Slinko}{2016}]{ElkindSlinko16}
Edith Elkind and Arkadii Slinko.
\newblock Rationalizations of voting rules.
\newblock In {\em Handbook of Computational Social Choice}. 2016.

\bibitem[\protect\citeauthoryear{Faliszewski \bgroup \em et al.\egroup
  }{2017}]{FaliszewskiSST17}
Piotr Faliszewski, Piotr Skowron, Arkadii Slinko, and Nimrod Talmon.
\newblock Multiwinner voting: A new challenge for social choice theory.
\newblock In {\em Trends in {C}omputational {S}ocial {C}hoice}. 2017.

\bibitem[\protect\citeauthoryear{Kruger \bgroup \em et al.\egroup
  }{2014}]{Axiomatic2014}
Justin Kruger, Ulle Endriss, Raquel Fern{\'{a}}ndez, and Ciyang Qing.
\newblock Axiomatic analysis of aggregation methods for collective annotation.
\newblock In {\em AAMAS}, 2014.

\bibitem[\protect\citeauthoryear{Lackner and Skowron}{2020}]{LacknerS20}
Martin Lackner and Piotr Skowron.
\newblock Approval-based committee voting: Axioms, algorithms, and
  applications.
\newblock {\em CoRR}, abs/2007.01795, 2020.

\bibitem[\protect\citeauthoryear{Meir \bgroup \em et al.\egroup
  }{2019}]{truth2019}
Reshef Meir, Ofra Amir, Gal Cohensius, Omer Ben{-}Porat, and Lirong Xia.
\newblock Truth discovery via proxy voting.
\newblock {\em arXiv:1905.00629}, 2019.

\bibitem[\protect\citeauthoryear{Nguyen \bgroup \em et al.\egroup
  }{2020}]{aggregation2020}
Vu{-}Linh Nguyen, Eyke H{\"{u}}llermeier, Michael Rapp, Eneldo~Loza
  Menc{\'{\i}}a, and Johannes F{\"{u}}rnkranz.
\newblock On aggregation in ensembles of multilabel classifiers.
\newblock In {\em Discovery Science}, 2020.

\bibitem[\protect\citeauthoryear{Nitzan and Paroush}{2017}]{collective2017}
Shmuel Nitzan and Jacob Paroush.
\newblock Collective decision making and jury theorems.
\newblock {\em The Oxford Handbook of Law and Economics}, 1, 2017.

\bibitem[\protect\citeauthoryear{Nowak and R{\"{u}}ger}{2010}]{reliable2010}
Stefanie Nowak and Stefan~M. R{\"{u}}ger.
\newblock How reliable are annotations via crowdsourcing: a study about
  inter-annotator agreement for multi-label image annotation.
\newblock In {\em MIR}, 2010.

\bibitem[\protect\citeauthoryear{Pivato}{2013}]{voting2011}
Marcus Pivato.
\newblock Voting rules as statistical estimators.
\newblock {\em Social Choice and Welfare}, 2013.

\bibitem[\protect\citeauthoryear{Pivato}{2017}]{epistemic2017}
Marcus Pivato.
\newblock Epistemic democracy with correlated voters.
\newblock {\em Journal of Mathematical Economics}, 2017.

\bibitem[\protect\citeauthoryear{Procaccia and Shah}{2015}]{isapproval2015}
Ariel~D. Procaccia and Nisarg Shah.
\newblock Is approval voting optimal given approval votes?
\newblock In {\em NIPS}, 2015.

\bibitem[\protect\citeauthoryear{Procaccia \bgroup \em et al.\egroup
  }{2012}]{maximum2012}
Ariel~D. Procaccia, Sashank~Jakkam Reddi, and Nisarg Shah.
\newblock A maximum likelihood approach for selecting sets of alternatives.
\newblock In {\em UAI}, 2012.

\bibitem[\protect\citeauthoryear{Qing \bgroup \em et al.\egroup
  }{2014}]{Empirical2014}
Ciyang Qing, Ulle Endriss, Raquel Fern{\'{a}}ndez, and Justin Kruger.
\newblock Empirical analysis of aggregation methods for collective annotation.
\newblock In {\em COLING}, 2014.

\bibitem[\protect\citeauthoryear{Shah and Zhou}{2020}]{ShahZ20}
Nihar~B. Shah and Dengyong Zhou.
\newblock Approval voting and incentives in crowdsourcing.
\newblock {\em {ACM} Transactions on Economics and Computation}, 2020.

\bibitem[\protect\citeauthoryear{Shapley and Grofman}{1984}]{ShapleyGrofman84}
Lloyd Shapley and Bernard Grofman.
\newblock Optimizing group judgmental accuracy in the presence of
  interdependencies.
\newblock {\em Public Choice}, 1984.

\bibitem[\protect\citeauthoryear{Xia and Conitzer}{2011}]{maximum2011}
Lirong Xia and Vincent Conitzer.
\newblock A maximum likelihood approach towards aggregating partial orders.
\newblock In {\em IJCAI}, 2011.

\bibitem[\protect\citeauthoryear{Xia \bgroup \em et al.\egroup
  }{2010}]{XiaCL10}
Lirong Xia, Vincent Conitzer, and J{\'{e}}r{\^{o}}me Lang.
\newblock Aggregating preferences in multi-issue domains by using maximum
  likelihood estimators.
\newblock In {\em AAMAS}, 2010.

\bibitem[\protect\citeauthoryear{Young}{1988}]{condorcet1988}
H.~Peyton Young.
\newblock Condorcet's theory of voting.
\newblock {\em American Political science review}, 1988.

\end{thebibliography}

\newpage
\begin{huge}
\noindent \textbf{Appendix}
\end{huge}
\section*{Data collection and incentives}
To see how the participants behave given the ranking incentives that we defined in the football quiz, we plotted the histogram of the sizes of the answers (see Figure \ref{hist}). It appears that although the platform enables to select every alternative, only two voters did so for all the questions. Moreover, figures $\ref{sw_hist}$ and $\ref{two_hist}$ show that the majority of the voters tend to select exactly the number of teams that appear in an image. 

\begin{figure}[h]
     \centering
     \begin{subfigure}[b]{0.49\textwidth}
          \centering
         \includegraphics[width=0.85\textwidth]{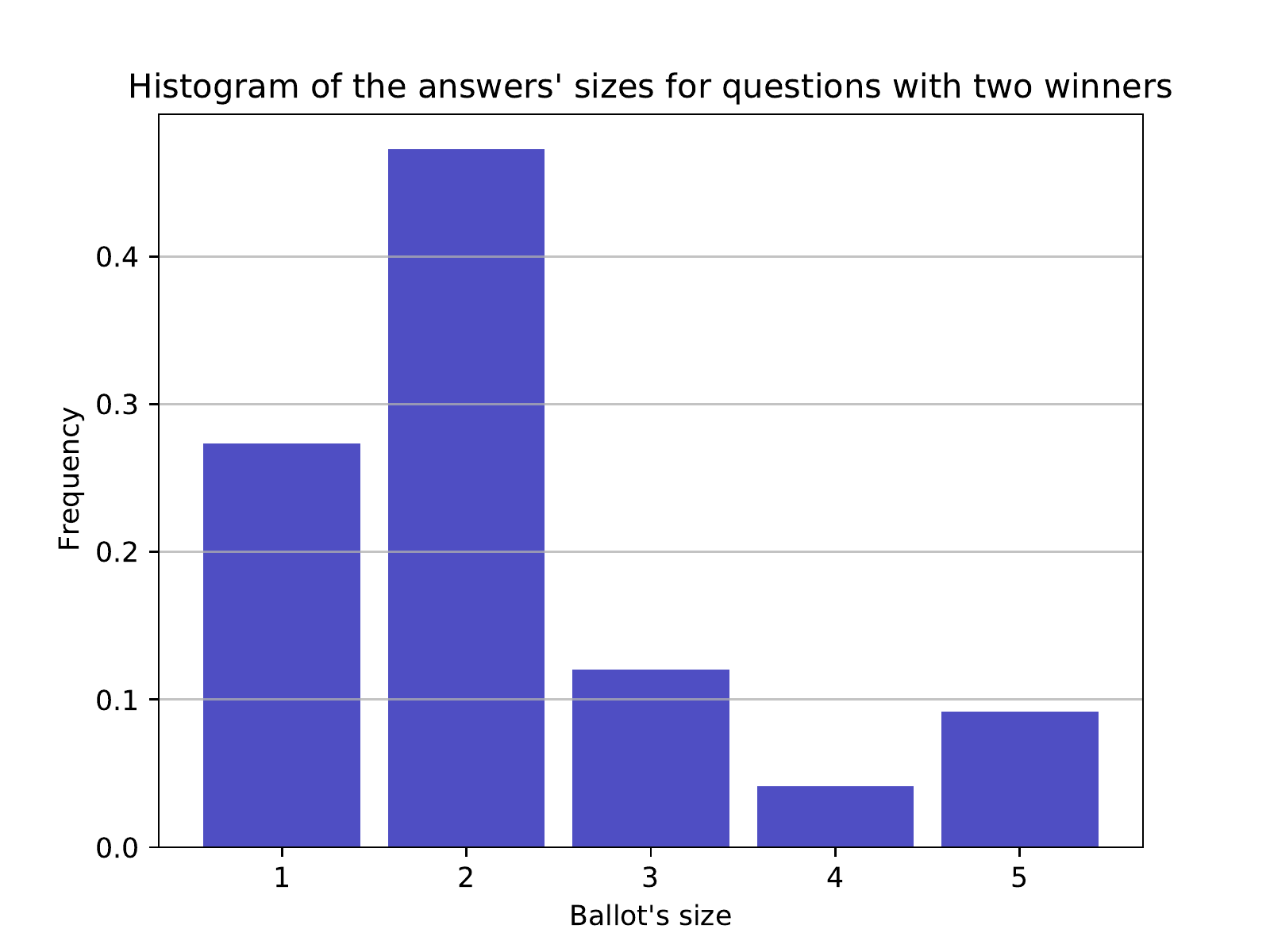}
        \subcaption{Two-winner instances}
        \label{two_hist}
     \end{subfigure}
     \begin{subfigure}[b]{0.49\textwidth}
          \centering
         \includegraphics[width=0.85\textwidth]{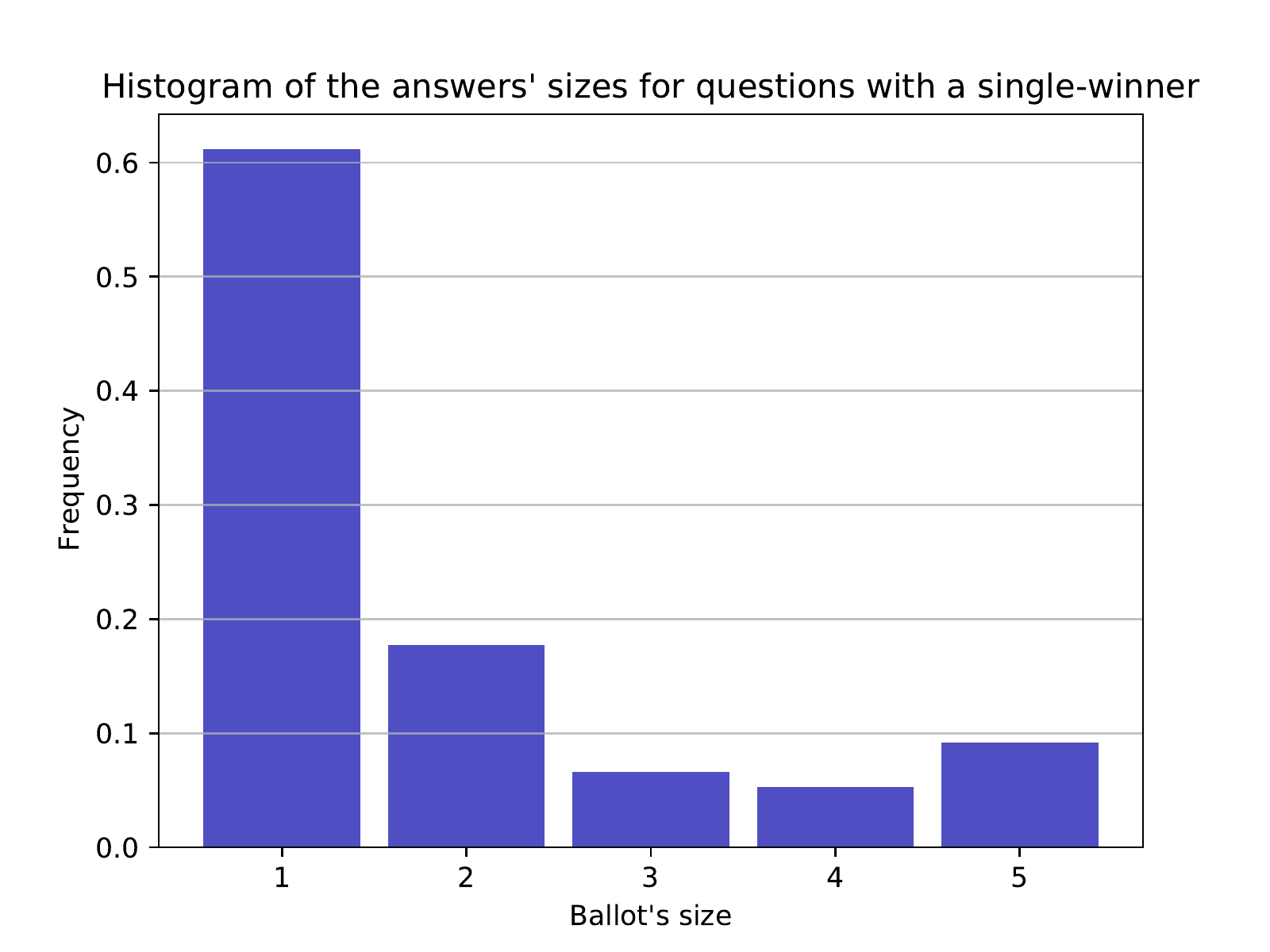}
        \subcaption{Single-winner instances}
        \label{sw_hist}
     \end{subfigure}
        \caption{Histogram of the ballots' sizes}
        \label{hist}
\end{figure}

\section*{Initializing Voters' Reliabilities}
Inspired by the \emph{Anna Karenina Principle} in~\cite{truth2019}, we devised an initialisation strategy for the voters' reliabilities. In his book, Leo Tolstoi stated that "Happy families are all alike; every unhappy family is unhappy in its own way". In the same spirit, it seems reasonable to make the hypothesis that accurate users tend to make similar answers, whereas inaccurate users have each their own way of being inaccurate.

 We use the following heuristic (see Algorithm \ref{algo_init}) for the initialization. We used the Jaccard distance given by:
 $$d_{Jacc}(A,B)= \frac{|\overline{A}\cap B|+|A \cap \overline{B}|}{|A \cup B|} $$
\begin{algorithm}[H]
\caption{ Initializing $(p_i,q_i)_i$ }
\label{algo_init}
$\begin{array}{ll}
\textbf{Input:} & \mbox{Approval ballots $(A_i^z)_{z,i}$}\\
\textbf{Output:} &\mbox{Initialization $(\hat{p}^{(0)}_i,\hat{q}^{(0)}_i)$} 
\end{array}$
\begin{algorithmic} 
\STATE -Compute $w_{max}=\frac{n}{1+n}, w_{min}=\frac{1}{1+n}$
\STATE -Compute $d_i = \sum_{j\neq i} d_{Jacc}(A_i,A_j)$
\STATE -Compute $d_{max} = \max d_i, d_{min} = \min d_i $
\STATE -Compute $w_i = (w_{max}-w_{min})\left(\frac{\frac{1}{d_i}-\frac{1}{d_{max}}}{\frac{1}{d_{min}}-\frac{1}{d_{max}}}\right)+w_{min}$
\STATE -Fix $\hat{p}^{(0)}_i= \frac{1}{2}$ and $\hat{q}^{(0)}_i=\frac{1-\frac{e^{w_i}-1}{e^{w_i}+1}}{2}$
\end{algorithmic}
\end{algorithm}

\begin{remark}
The formulas in Algorithm \ref{init} guarantee that a voter's parameters $(\hat{p}^{(0)}_i,\hat{q}^{(0)}_i)$ are such that her initial weight is equal to $w_i$, and that $\frac{w_{max}}{w_{min}}=n$ which means that initially, the voter closest in average to the other voters counts $n$ times the voter with biggest average distance. 
\end{remark}
\begin{example}
Consider following the approval profile (Table \ref{app profile}) for $3$ voters, $5$ alternatives and $4$ Instances.
\begin{table}[h]
    \centering
    \begin{tabular}{|l|c|c|c|c|}
  \hline
 & $A^1$  & $A^2$ & $A^3$ & $A^4$ \\
  \hline
  Voter $1$ & $\{a_1,a_4\}$ & $\{a_1\}$ & $\{a_3\}$ & $\{a_1\}$ \\
  \hline
  Voter $2$ & $\{a_2\}$ & $\{a_5\}$ & $\{a_4\}$ & $\{a_1\}$\\
  \hline
  Voter $3$ & $\{a_2,a_3,a_4\}$ & $\{a_2,a_3,a_5\}$ & $\{a_2,a_3\}$ & $\{a_3\}$ \\
  \hline 
\end{tabular}
    \caption{Approval Ballots of 3 Voters on 4 Instances}
    \label{app profile}
\end{table}
Here we have that:
$$w_{max}=\frac{n}{n+1}=0.75, w_{min}=\frac{1}{n+1}=0.25 $$
First, compute the mean Jaccard distance of all voters:
$d_1=1.71, d_2=1.69,d_3=1.65 $.
So $d_{max}=d_1=1.71$ and $d_{min}=d_3=1.65$, which means that voter $3$ (the closest in average to all the voters) will get the biggest weight $w_3=w_{max}=0.75$ and voter $1$ gets the smallest weight $w_1=w_{min}$.
Next, compute the weight that will be assigned to each voter, for instance:
$$w_2=(w_{max}-w_{min})\frac{\frac{1}{d_2}-\frac{1}{d_{max}}}{\frac{1}{d_{min}}-\frac{1}{d_{max}}}+w_{min}=0.38 $$
Now we can set the initial values for the reliability parameters accordingly:
$$\hat{p}^{(0)}_2= \frac{1}{2} ,\hat{q}^{(0)}_2=\frac{1-\frac{e^{w_2}-1}{e^{w_2}+1}}{2} $$
We can check that these parameters are such that:
$$ln\left[ \frac{p_2(1-q_2)}{q_2(1-p_2)}\right]=w_2 $$
After proceeding in the same fashion with all the voters, we get the initial parameters:
$$\left\{
    \begin{array}{lll}
        \hat{p}_1^{(0)} = 0.5 & \hat{p}_2^{(0)} = 0.5 & \hat{p}_3^{(0)} = 0.5  \\
        \hat{q}_1^{(0)} = 0.44 & \hat{q}_2^{(0)}  =0.41 & \hat{q}_3^{(0)}  =0.32 \\
    \end{array}
\right.$$

\end{example}

Since the AMLE only guarantees convergence to a local maximum, which makes the result depending on the initial point, we compared the results of this initialization (Anna Karenina) to other procedures to motivate its choice, see Figure \ref{init}, namely we tested:
\begin{itemize}
    \item Uniform weights: Initially all the voters in the batch are given the same weight.
    \item Random weights: Initially, for each voter in the batch, $p_i$ is randomly picked from $(0.5,1)$ and $q_i$ is randomly picked from $(0,0.5)$.
\end{itemize}
We can notice that these two baseline procedures show very similar performances, and that they are both outperformed by the Anna Karenina initialization.

\begin{figure}[h]
     \centering
     \begin{subfigure}[b]{0.45\textwidth}
         \centering
         \includegraphics[width=0.85\textwidth]{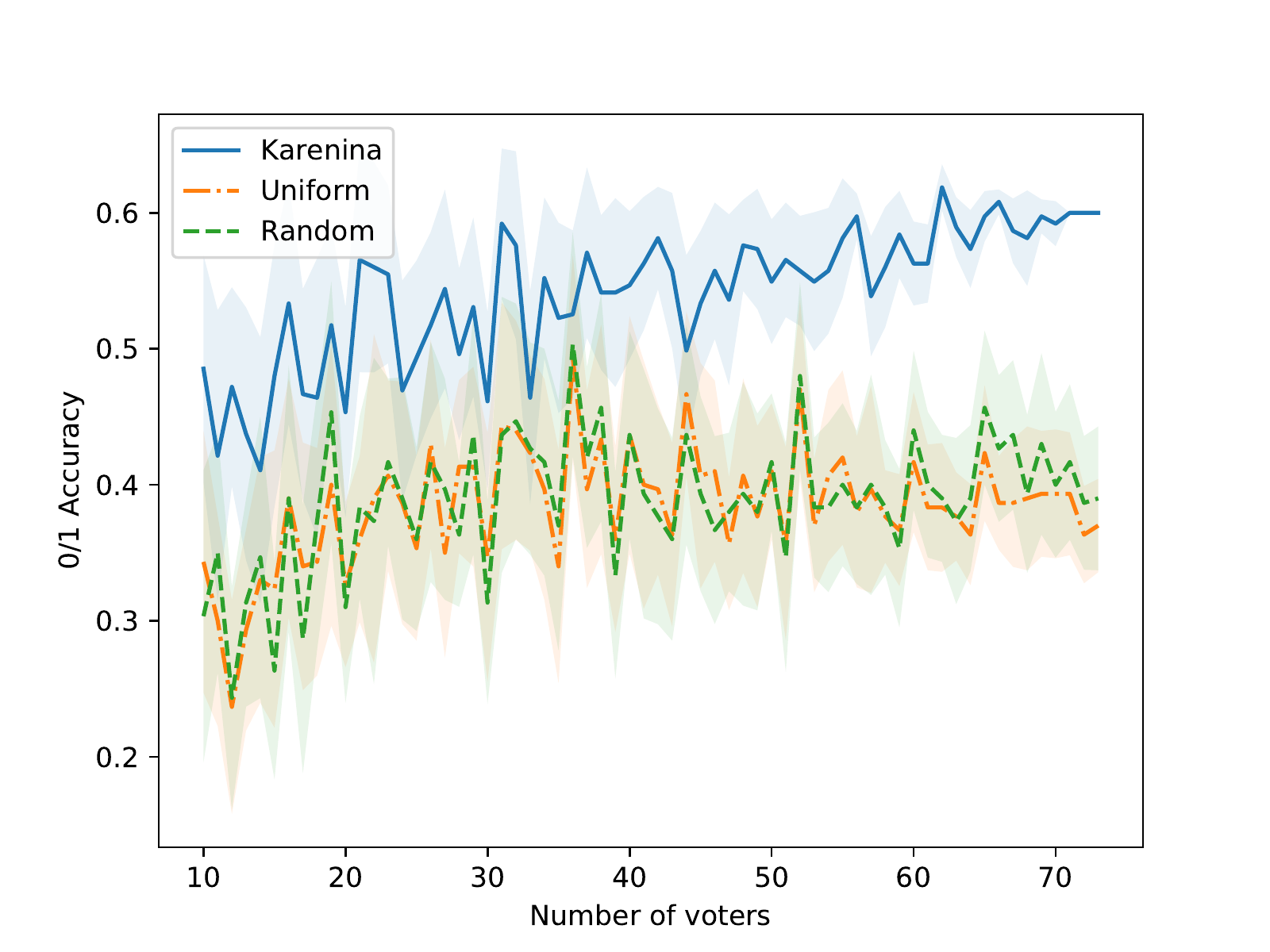}
             \subcaption{0-1 accuracy}
        \label{init_01}
     \end{subfigure}
     \begin{subfigure}[b]{0.45\textwidth}
          \centering
         \includegraphics[width=0.85\textwidth]{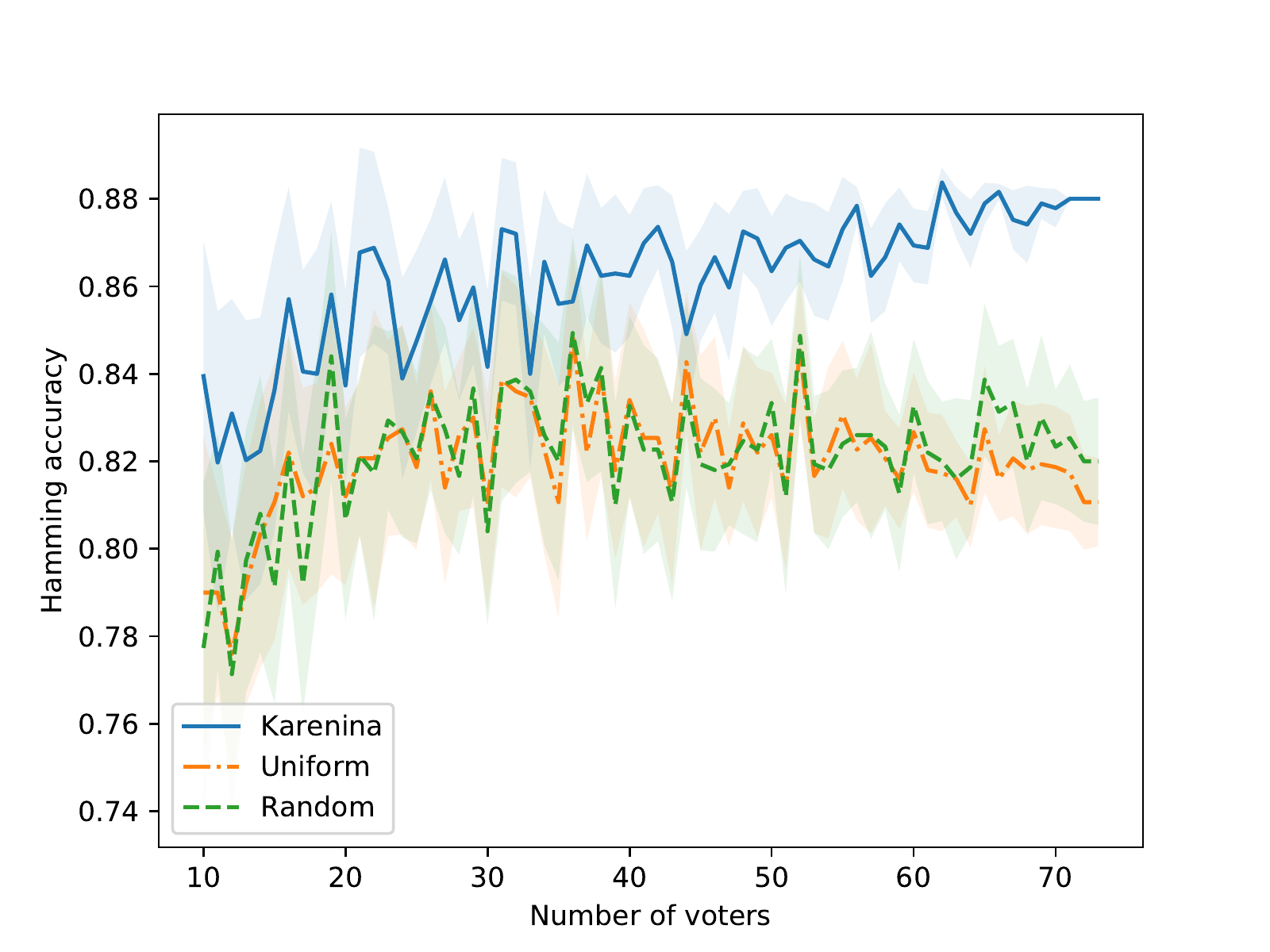}
        \subcaption{Hamming accuracy}
        \label{init_ham}
     \end{subfigure}
        \caption{Accuracies of different initializations}
        \label{init}
\end{figure}

\section*{Time Complexity of AMLE}
We assessed the execution time of the AMLE algorithm with and without constraints (refered to as AMLE and AMLE$_f$), run on Intel Core i7-10610U CPU @1.80Ghz 4 cores, 8 threads and 32Gb RAM. Results are show in Figure \ref{complexity}. We can see that whereas the number of iteration does not seem to grow as the number of voter increases, the execution time of AMLE does, especially around $40$ voters.
\begin{figure}[H]
     \centering
     \begin{subfigure}[b]{0.45\textwidth}
         \centering
         \includegraphics[width=0.85\textwidth]{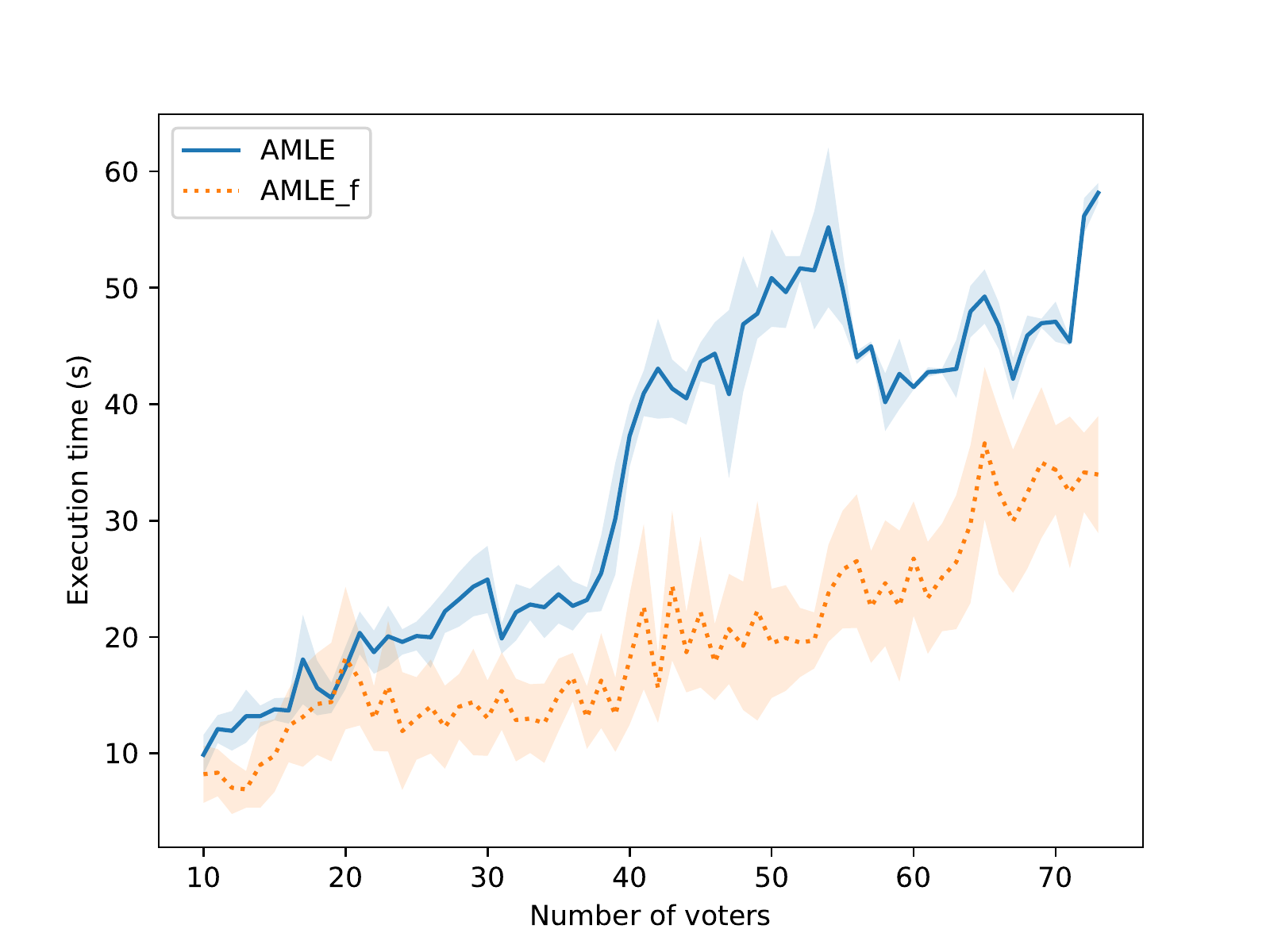}
             \subcaption{Execution time of AMLE}
        \label{time}
     \end{subfigure}
     \begin{subfigure}[b]{0.45\textwidth}
          \centering
         \includegraphics[width=0.85\textwidth]{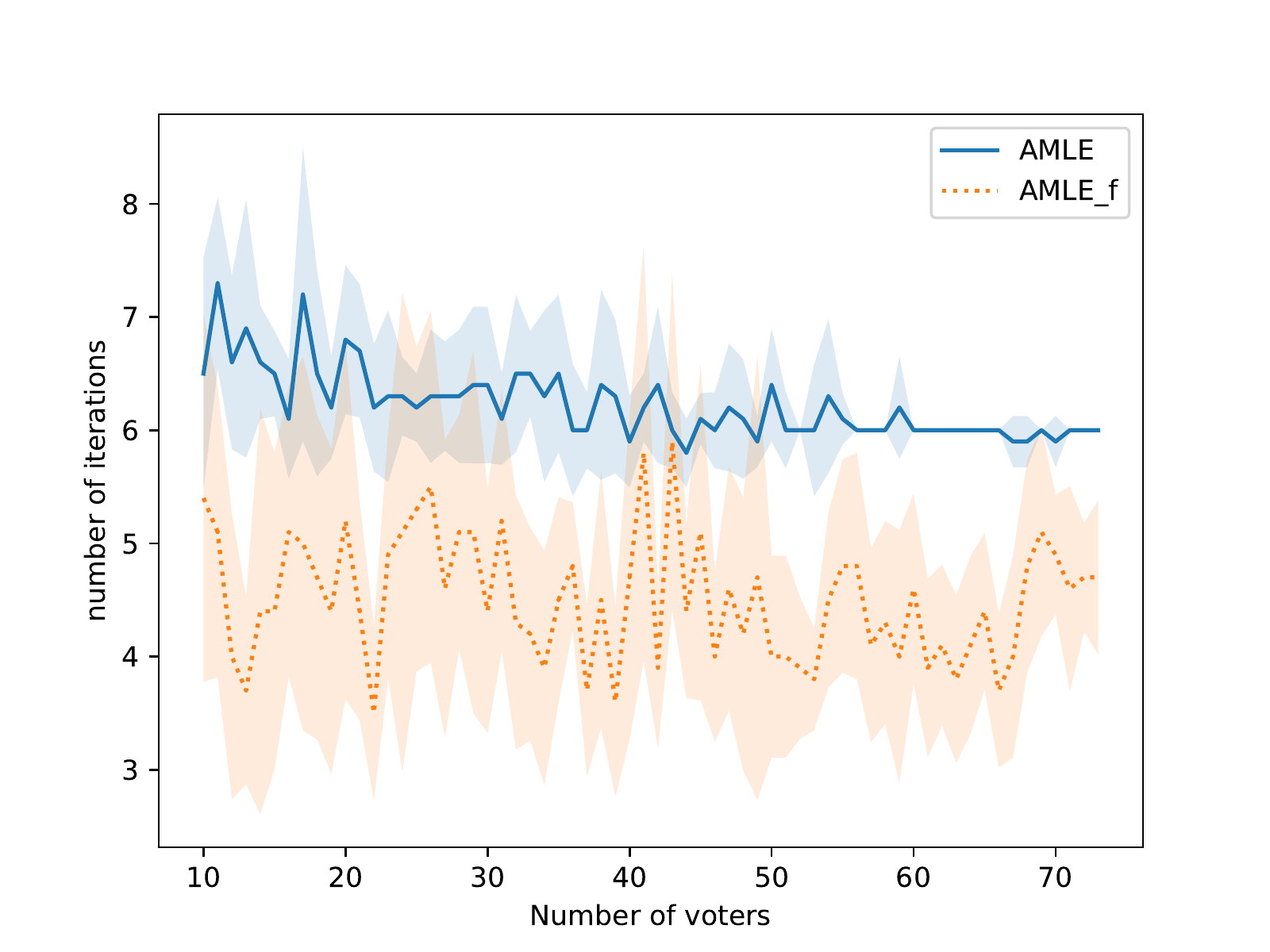}
        \subcaption{Number of iterations of AMLE}
        \label{iterations}
     \end{subfigure}
        \caption{Time complexity of AMLE}
        \label{complexity}
\end{figure}

\section*{Losses}
\subsection*{Hamming, Harmonic and 0-1 Subset Metrics}
In addition to the Hamming and 0-1 subset accuracies, we introduced a new metric which can be considered as an intermediate one. The Hamming metric considers each label independently and the 0-1 subset loss considers them jointly in a strict fashion, whereas the harmonic accuracies that we introduced considers all the instance's labels jointly but with different convex weights depending on the number of correctly predicted ones:
$$T(S,S^*) = \sum_{k=1}^{|S\cap S^*|} \frac{1}{6-k} $$
So out of the 5 labels:
\begin{itemize}
    \item if 0 labels are correct then $T = 0$.
    \item if 1 labels is correct then $T = \frac{1}{5}$.
    \item if 2 labels are correct then $T = \frac{1}{5}+\frac{1}{4}$.
    \item if 3 labels are correct then $T = \frac{1}{5}+\frac{1}{4}+\frac{1}{3}$.
    \item if 4 labels are correct then $T = \frac{1}{5}+\frac{1}{4}+\frac{1}{3}+\frac{1}{2}$.
    \item if 5 labels are correct then $T = \frac{1}{5}+\frac{1}{4}+\frac{1}{3}+\frac{1}{2}+1$.
\end{itemize}

Defined as such, this accuracy favours the estimators that are able to correctly estimate most of the instance's labels without being as rigid as the 0-1 subset accuracy.
 
This metric is reminiscent of the Proportional Approval Voting rule for multiwinner elections, which defines the score of a subset of candidates $W$ for a voter as $1 + \frac12 + \ldots + \frac1j$, where $j$ is the number of candidates in $W$ approved by the voter. We could consider more generally a class of metrics defined by a vector $\vec{w}$, such that $T(S,S^*) = w_{|S \cap S^*|}$. This class generalizes Hamming, 0-1 and Harmonic and is reminiscent of the class of {\em Thiele} rules (see for instance \cite{LacknerS20} for an extended presentation of multiwinner approval-based committee rules).

\subsection*{Results}
We show in Table \ref{entire_dataset} the accuracies of the considered methods when applied to the entire annotation dataset. In Figure \ref{harmonic} we show the evolution of the Harmonic accuracies when the number of randomly picked voters in each batch increase.
\begin{table}[h]
    \centering
    \begin{tabular}{|l|c|c|c|c|}
  \hline
   &$\mbox{AMLE}_c$  & $\mbox{AMLE}_f$ & Modal & Majority \\
  \hline
  Hamming & \textbf{0.88} & 0.86 & 0.84 & 0.80 \\
  \hline
  Harmonic & \textbf{0.78} & 0.74  & 0.69 & 0.61\\
  \hline
  0/1 & \textbf{0.60} & 0.53  & 0.46 & 0.26\\
  \hline
\end{tabular}
    \caption{Hamming and 0/1 accuracy for entire dataset}
    \label{entire_dataset}
\end{table}

\begin{figure}[h]
         \centering
         \includegraphics[width=0.4\textwidth]{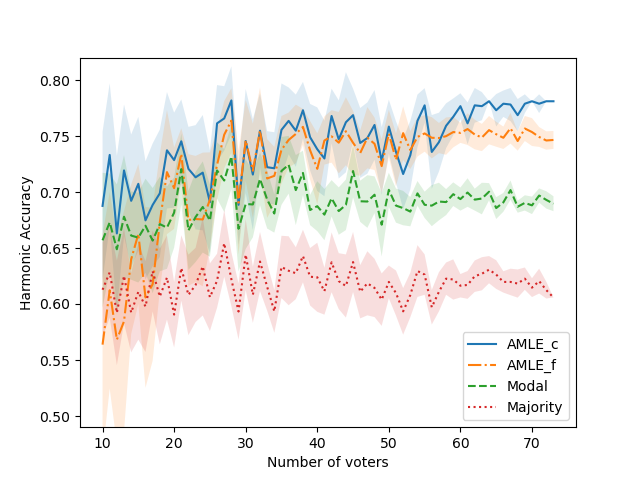}
         \caption{Normalized Harmonic accuracy}
        \label{harmonic}
\end{figure}

\end{document}